\documentclass[lettersize,journal]{IEEEtran}
\usepackage{amsmath,amsfonts}
\usepackage{algorithmic}
\usepackage{algorithm}
\usepackage{array}
\usepackage[caption=false,font=normalsize,labelfont=sf,textfont=sf]{subfig}
\usepackage{textcomp}
\usepackage{stfloats}
\usepackage{url}
\usepackage{verbatim}
\usepackage{graphicx}
\usepackage{makecell}
\usepackage{booktabs}
\usepackage{multirow}
\usepackage{amsthm}
\usepackage{enumitem}
\usepackage{marvosym}
\usepackage{booktabs}
\usepackage[numbers,sort&compress]{natbib}
\usepackage[hidelinks]{hyperref}
\usepackage{algorithm}
\usepackage{algorithmic}
 
\makeatletter
\newcommand{\removelatexerror}{\let\@latex@error\@gobble}
\makeatother

\usepackage{multirow}
\usepackage{stfloats}
\usepackage{colortbl}
\usepackage{makecell}
\usepackage{thmtools}
\usepackage{thm-restate}
\usepackage[table]{xcolor}

\newtheorem{theorem}{Theorem}
\newtheorem{assumption}{Assumption}

\makeatletter
\let\supplementmaketitle\maketitle
\let\supplementinternalmaketitle\@maketitle
\makeatother

\begin{document}

\title{Visual-Invariance-Augmented Feature Optimal Alignment for Transferable Adversarial Attacks against Closed-Source MLLMs}

\author{Xiaojun Jia,
        Simeng Qin,
        Yiming Li,
        Jie Liao, 
        Sensen Gao, \\
        Ke Ma,
        Yang Liu,~\IEEEmembership{Senior Member,~IEEE},
        and Xiaochun Cao~\IEEEmembership{Senior Member,~IEEE}%
\IEEEcompsocitemizethanks{
\IEEEcompsocthanksitem Xiaojun Jia, Yang Liu, Yiming Li, and Sensen Gao are with Nanyang Technological University, Singapore (e-mail: jiaxiaojunqaq@gmail.com; yangliu@ntu.edu.sg; liyiming.tech@gmail.com; sensen.gao2002@gmail.com).
\IEEEcompsocthanksitem Simeng Qin is with Northeastern University, Shenyang, Liaoning, China (e-mail: qinsimeng@neuq.edu.cn).
\IEEEcompsocthanksitem Jie Liao is with Chongqing University, Chongqing, China (e-mail: liaojie@cqu.edu.cn).
\IEEEcompsocthanksitem Ke Ma is with the School of Electronic, Electrical and Communication Engineering, University of Chinese Academy of Sciences, Beijing 100049, China (e-mail: make@ucas.ac.cn).
\IEEEcompsocthanksitem Xiaochun Cao is with the School of Cyber Science and Technology, Shenzhen Campus of Sun Yat-sen University, Shenzhen 518107, China (e-mail: caoxiaochun@mail.sysu.edu.cn).
}%
}

\markboth{Manuscript for IEEE Transactions on Pattern Analysis and Machine Intelligence}%
{Jia \MakeLowercase{\textit{et al.}}: Visual-Invariance-Augmented Feature Optimal Alignment}


\maketitle

\begin{abstract}
Multimodal large language models (MLLMs) remain vulnerable to transferable adversarial examples, especially in black-box scenarios where only open-source surrogate models are accessible. Existing targeted transfer attacks mainly align adversarial and target samples using global image-level features, such as the [CLS] embedding of vision-language encoders. However, such coarse alignment insufficiently exploits patch-level visual structures, leading to limited transferability across heterogeneous closed-source MLLMs. To address this issue, we propose a visual-invariance-augmented feature optimal alignment attack with adaptive unbalanced transport, called IAU-FOA, for improving targeted transferability against closed-source MLLMs. IAU-FOA aligns adversarial and target samples from both global and local perspectives: a cosine-based objective narrows their global semantic gap, while patch tokens are clustered into compact local patterns and matched through optimal transport for fine-grained feature alignment. Balanced optimal transport enforces fixed marginal masses even for local clusters without reliable counterparts, potentially introducing misleading alignment gradients. We therefore introduce confidence-adaptive unbalanced transport to relax these constraints for weakly matched clusters, aiming to reduce unreliable local alignment and improve adversarial transferability. Moreover, we study the effect of input transformations on adversarial transferability and propose a visual-invariance augmentation strategy that applies bidirectional pixel-intensity rescaling and per-channel white-balance adjustment to simulate exposure, contrast, illumination, and color-temperature variations, thereby encouraging adversarial perturbations to generalize across different visual encoders. Extensive experiments on open-source and closed-source MLLMs show that IAU-FOA consistently outperforms state-of-the-art transferable attack methods. The code is released at \href{https://github.com/jiaxiaojunQAQ/IAU-FOA}{https://github.com/jiaxiaojunQAQ/IAU-FOA}.
\end{abstract}

\begin{IEEEkeywords}
Multimodal large language models, transferable adversarial attacks, feature optimal alignment, adaptive unbalanced optimal transport, visual-invariance augmentation.
\end{IEEEkeywords}
\section{Introduction}
\IEEEPARstart{M}ultimodal large language models (MLLMs)~\citep{yin2024survey,fu2026mme} have become a representative framework for connecting visual perception with language reasoning~\citep{huang2025vision,kuang2025natural}. In most current systems, an input image is first encoded into visual tokens or embeddings by a vision encoder, and these visual representations are then projected into the language model for multimodal understanding and generation~\citep{li2023blip, liu2023visual, zhu2024minigpt, lu2024deepseek, wang2024qwen2}. In particular, many MLLMs adopt pre-trained vision encoders, such as Contrastive Language-Image Pre-training (CLIP)~\citep{radford2021learning}, to extract image features and align them with language embeddings. This modular design has enabled strong performance on a wide range of vision-language applications. Beyond advances in open-source MLLMs, commercial closed-source MLLMs such as GPT-5.5, Claude-Opus-4.8, and Gemini-3.5-Flash have also been widely deployed in practical multimodal services. However, since MLLMs heavily rely on vision encoders to perceive visual inputs, they may also inherit the adversarial vulnerabilities of these visual front ends~\citep{yuan2026frequency,lu2026make,wang2026adversarial}. Recent studies~\citep{dong2023robust,gao2024boosting,liu2024safety,mei2025veattack} have shown that MLLMs can be vulnerable to adversarial examples~\citep{DBLP:journals/corr/GoodfellowSS14}, which are typically generated by adding small and visually imperceptible perturbations to clean images, such that the perturbed images remain natural to humans but change the model's responses. Such vulnerability raises security and safety concerns for real-world foundation-model services, where manipulated visual inputs may mislead downstream multimodal reasoning. According to the attack objective, adversarial attacks on MLLMs can be broadly divided into untargeted and targeted attacks. Untargeted attacks aim to induce incorrect responses, whereas targeted attacks attempt to drive the model toward attacker-specified semantics or outputs~\cite{yuan2019adversarial,jiang2025survey}.

\begin{figure}[t]
\begin{center}
 \includegraphics[width=0.9\linewidth]{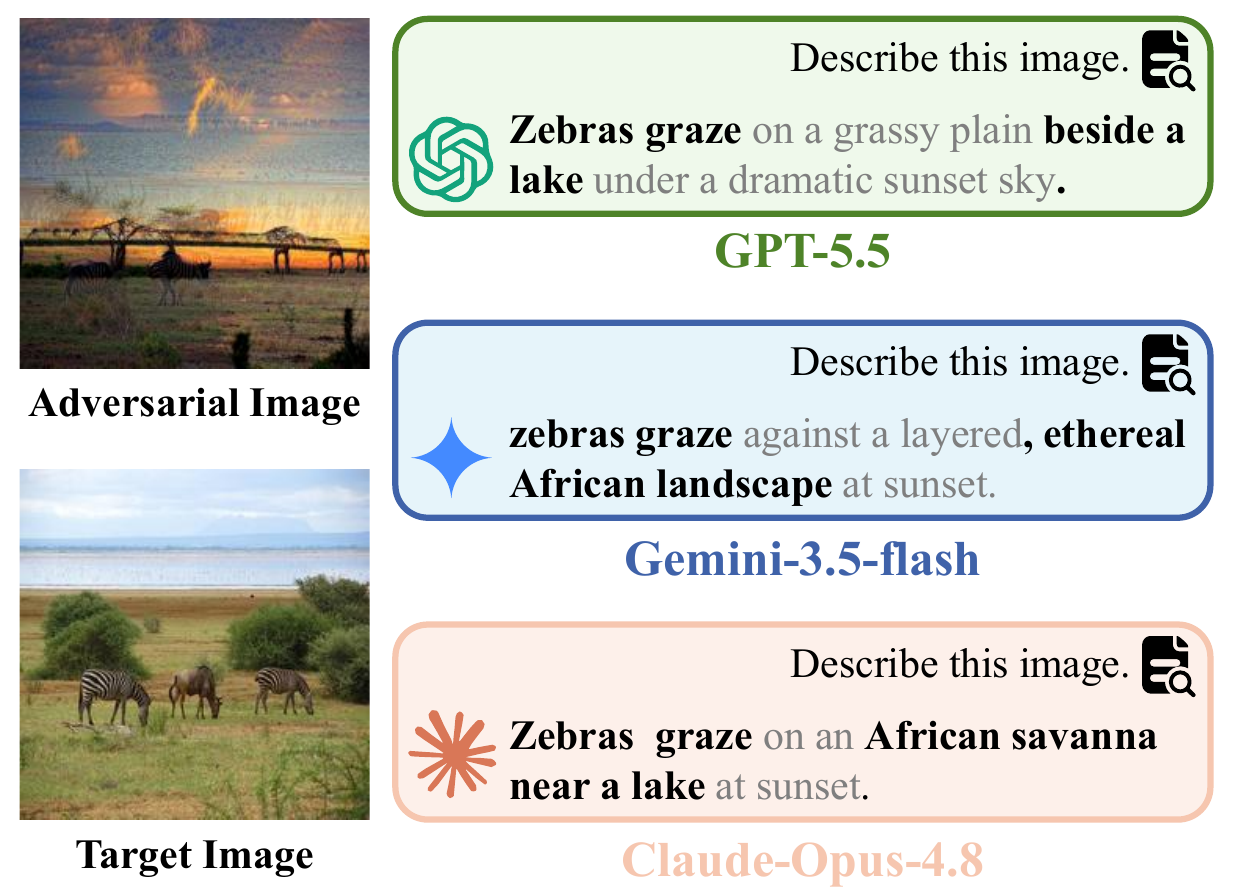}
\end{center}
\vspace{-5mm}
\caption{Examples of targeted transfer attacks against commercial MLLMs. The adversarial images remain visually close to the source images, but induce target-related responses to the prompt ``Describe this image''.}
        \label{fig:home}
\end{figure}

As illustrated in Fig.~\ref{fig:home}, targeted adversarial examples can preserve the visual appearance of the source image while steering commercial MLLMs toward target-related descriptions. In closed-source scenarios, the gradients and parameters of target MLLMs are inaccessible, making direct optimization infeasible. A practical alternative is to craft adversarial examples on open-source surrogate models and then transfer them to the target systems. This property, known as adversarial transferability~\citep{gu2023survey}, measures whether adversarial examples generated for one model can remain effective on other unseen models. Transferability is important for both untargeted and targeted attacks, but it is particularly difficult to achieve in the targeted black-box setting, where the perturbation must consistently convey the target semantics across heterogeneous MLLMs rather than merely causing prediction errors~\citep{byun2022improving,zhu2023boosting}. Therefore, an effective targeted attack should not simply exploit surrogate-specific weaknesses, but should learn perturbations that preserve transferable visual semantics across different visual encoders and preprocessing pipelines.

Previous targeted transfer attack methods~\citep{li2026frustratingly,li2026multi,wang2026adversarial} typically generate adversarial examples by minimizing a contrastive loss between the global features of the adversarial image and the target image, where the global representation is often given by the \texttt{[CLS]} token of an open-source image encoder such as CLIP. By pulling the adversarial image toward the target image in the surrogate feature space, these methods can improve targeted transferability to some extent. However, their adversarial transferability and attack performance against closed-source MLLMs remain limited, mainly due to two limitations. First, most existing methods rely primarily on global image-level features. Although global feature alignment can guide adversarial examples toward the target semantics, it compresses the visual content into a single representation and overlooks rich patch-level structures. As a result, the generated perturbations may capture coarse semantic similarity while failing to sufficiently exploit fine-grained visual patterns that are important for transfer across heterogeneous MLLMs. Even when local features are aligned, exact marginal constraints can force matches between clusters without reliable counterparts. These matches may introduce gradients that favor incidental similarities in a surrogate encoder rather than correspondences shared across models, motivating confidence-adaptive local transport. Second, the choice of input transformations for targeted transfer attacks against MLLMs warrants further investigation. Existing approaches include spatial transformations such as resizing and padding in DI~\citep{xie2019improving}, as well as pixel-intensity scaling in SIM~\citep{lin2019nesterov}. Building on intensity-based augmentation, we investigate the combination of bidirectional intensity scaling and channel-wise gain adjustment to diversify exposure and illumination-color conditions during targeted feature alignment.

To address these issues, we propose \textbf{IAU-FOA}, a visual-invariance-augmented feature optimal alignment attack with an adaptive unbalanced transport method. Given a source image and a target image, IAU-FOA generates an adversarial perturbation by aligning their representations in surrogate vision-language encoders from both global and local perspectives. At the global level, IAU-FOA adopts a coarse-grained feature alignment objective based on cosine similarity, which pulls the image-level representation of the adversarial example toward that of the target sample. Since the global representation is usually encoded by the \texttt{[CLS]} token, this objective provides a stable semantic direction for targeted attacks and encourages the adversarial image to approach the target semantics in the surrogate feature space.

At the local level, IAU-FOA further exploits the patch-token representations produced by Transformer-based vision encoders. While the \texttt{[CLS]} token summarizes global information, the remaining visual tokens contain local patch-level details. Directly using all local tokens, however, can introduce redundancy and noisy correspondences. Therefore, IAU-FOA first clusters patch tokens into compact and discriminative local visual patterns, where each cluster center summarizes the characteristics of a group of related patches. It then formulates the alignment between adversarial and target local patterns as an optimal transport problem, enabling fine-grained feature matching beyond a single global embedding.

To make local alignment more reliable, IAU-FOA further introduces an adaptive unbalanced transport mechanism. Balanced OT requires each cluster to transport its prescribed mass, even when reliable counterparts are absent. The proposed scheme instead adapts marginal regularization to cluster-level matching confidence. High-confidence clusters receive stronger penalties on marginal deviations, while low-confidence clusters receive weaker penalties. This flexibility aims to reduce the interference of unreliable local matches in adversarial optimization and thereby improve adversarial transferability. In addition, when multiple visual encoders are used as surrogate models, IAU-FOA computes an alignment similarity score for each encoder and dynamically adjusts its weight using the ratio between the current and previously recorded scores. For positive scores, larger relative increases receive larger weights, allowing the ensemble to respond to changes in alignment across surrogate encoders.

Finally, IAU-FOA incorporates a visual-invariance augmentation (VIA) strategy during adversarial optimization. Building on existing intensity-scaling augmentation, our strategy combines bidirectional pixel-intensity rescaling with channel-wise gain adjustment to approximate exposure and white-balance variations during targeted alignment. By optimizing adversarial perturbations under these transformed visual conditions, IAU-FOA encourages the perturbations to rely less on surrogate-specific visual statistics and to remain effective across heterogeneous closed-source MLLMs. Our main contributions are summarized as follows:
\begin{itemize}
    \item We propose \textbf{IAU-FOA}, a targeted black-box attack framework for MLLMs that drives adversarial images toward target semantics through a dual-granularity feature matching objective. Specifically, a cosine-based loss provides coarse semantic guidance from global embeddings, while clustered patch tokens are matched through an optimal-transport formulation to capture discriminative local structures.
    \item We propose an adaptive unbalanced transport mechanism to improve the reliability of fine-grained matching. It relaxes marginal constraints for weakly matched clusters, aiming to reduce misleading alignment gradients arising from unreliable local correspondences and improve targeted transferability.
    \item We propose a dynamic weighting strategy for ensemble surrogate optimization. The strategy adjusts each encoder's contribution according to the relative change in its alignment similarity score, providing a score-based alternative to fixed ensemble weights for targeted transfer.
    \item We propose a visual-invariance augmentation strategy for transferable adversarial optimization. Through bidirectional intensity scaling and channel-wise white-balance adjustment, the attack is optimized under diverse photometric conditions and becomes less dependent on surrogate-specific visual statistics.
    \item Extensive experiments on open-source and closed-source MLLMs demonstrate that IAU-FOA consistently outperforms state-of-the-art transferable attack methods.
\end{itemize}

This paper is an extension of our preliminary conference version, FOA~\citep{jia2026adversarial}, which forms the foundation of the proposed framework. Compared with the original balanced feature optimal alignment framework, we extend this work in four aspects: 1) We introduce visual-invariance-augmented optimization in \textbf{Section~\ref{sec:via}} to improve robustness against photometric changes and encoder-dependent visual statistics. 2) We develop an adaptive unbalanced transport method for local feature alignment in \textbf{Section~\ref{sec:adaptive_uot}}, which replaces exact marginal preservation with confidence-dependent penalties on marginal deviations. We add a theoretical analysis of the proposed method in \textbf{Section~\ref{sec:transfer_analysis}}. 3) We conduct more experiments and analyses, including comparisons with state-of-the-art methods, ablation studies, and evaluation under adversarial defenses. We include additional state-of-the-art adversarial attack methods for comparison in \textbf{Section~\ref{sec:comparative_results}}. We add ablation studies of the proposed components in \textbf{Section~\ref{sec:ablation_study}} and evaluate the proposed method under adversarial defenses in \textbf{Section~\ref{sec:performance_analysis}}. 4) We have thoroughly revised the abstract, introduction, method, experiments, and conclusion sections to offer a more detailed overview of our motivation and approach. We have updated all figures and tables to enhance clarity and presentation.

\section{Related Work}

\subsection{Multimodal Large Language Models}
Large language models (LLMs)~\citep{zhao2023survey} have achieved impressive success across a wide range of Natural Language Processing (NLP) tasks, demonstrating strong capabilities in language understanding~\citep{karanikolas2023large}, instruction following~\citep{harada2025instructions}, and complex reasoning~\citep{ma2026general,guo2025r}. Building upon these advances, multimodal large language models (MLLMs) extend LLMs to vision-language scenarios by integrating visual perception modules with LLM backbones. Specifically, MLLMs align visual representations with the semantic space of LLMs, enabling the language model to interpret visual inputs and generate grounded textual responses~\citep{alayrac2022flamingo,su2023pandagpt,liu2024improved}. Representative systems usually connect pre-trained vision encoders and LLMs through lightweight alignment modules, such as the Q-Former in BLIP-2~\citep{li2023blip} and the MLP projector in LLaVA~\citep{liu2023visual}. Some open-source models further enhance fine-grained visual perception through versatile visual instruction tuning and high-resolution image processing, as demonstrated by Qwen2-VL~\citep{wang2024qwen2} and InternVL~\citep{chen2024far}. Meanwhile, commercial MLLMs, such as GPT-5.5 and Claude-Opus-4.8, have exhibited powerful multimodal understanding, reasoning, and instruction-following capabilities, representing the leading performance of current vision-language systems. However, their closed-source nature, together with substantial differences in model architectures, vision encoders, alignment modules, and training recipes, makes their internal representation spaces difficult to access or approximate, posing challenges for black-box attacks against modern MLLMs.

\subsection{Adversarial Attacks}
Adversarial attacks aim to mislead deep neural networks by adding carefully crafted perturbations to clean inputs~\citep{szegedy2013intriguing,DBLP:journals/corr/GoodfellowSS14}. According to the attacker's access to the victim model, existing adversarial attack methods are commonly categorized into white-box attacks, where model parameters or gradients are available, and black-box attacks, where the attacker has no access to internal model parameters, gradients, or feature representations \cite{madry2017towards,zhang2025adversarial}. Depending on the attack objective, adversarial attacks can also be divided into untargeted attacks and targeted attacks. Untargeted attacks only require the model to produce incorrect outputs, whereas targeted attacks aim to induce specific target semantics or responses. Compared with conventional adversarial attacks on image classification models, attacking MLLMs is substantially more challenging~\citep{jiang2025survey,jain2026adversarial}. Classification models usually produce predictions over a fixed label space, allowing attacks to optimize well-defined objectives. In contrast, MLLMs generate open-ended textual responses conditioned on both visual and textual inputs, where the attack goal is often expressed at the semantic or instruction-following level rather than as a discrete label. For closed-source MLLMs, the challenge becomes more pronounced because their parameters, gradients, and intermediate feature representations are unavailable to attackers. A potential solution is to exploit adversarial transferability~\citep{papernot2016transferability,liu2016delving}, where adversarial examples generated on accessible surrogate models remain effective on unseen target models. This property enables transfer-based attacks to probe closed-source MLLMs without directly accessing their internal states, making it a practical and promising route for targeted black-box attacks~\citep{xie2019improving,lin2019nesterov}.

\subsection{Transfer-Based Targeted Attacks for MLLMs}
Recent transfer-based attacks on vision-language models typically optimize adversarial perturbations on accessible surrogate models, such as CLIP, and transfer them to closed-source MLLMs. Specifically, Guo \textit{et al.}~\citep{guo2024efficient} propose AdvDiffVLM, which leverages diffusion models to generate natural and transferable targeted adversarial examples for vision-language models. Zhang \textit{et al.}~\citep{zhang2025anyattack} propose AnyAttack, a self-supervised framework for large-scale generation of label-free targeted adversarial examples, improving the scalability of targeted transfer attacks. Xie \textit{et al.}~\citep{xie2025chain} propose Chain of Attack (COA), a transfer-based black-box attack method that iteratively generates adversarial examples through multimodal semantic updates between visual and textual modalities.
To better exploit visual details for closed-source MLLM attacks, Li \textit{et al.}~\citep{li2026frustratingly} propose M-Attack, which constructs a simple yet effective black-box attack baseline by enhancing target-related visual patterns. Zhao \textit{et al.}~\citep{zhao2026pushing} further propose M-Attack-v2, which emphasizes fine-grained detail targeting to improve attack effectiveness against strong black-box MLLMs. Li \textit{et al.}~\citep{li2026multi} propose MPCAttack, which collaboratively aggregates multi-paradigm vision-language features to reduce surrogate representation bias and strengthen transfer-based attacks against MLLMs. Despite these advances, existing targeted transfer attacks still show limited effectiveness against advanced closed-source MLLMs, partly because they often rely on surrogate-specific global representations or do not sufficiently account for the reliability of local correspondences. In addition, fine-grained adaptive alignment and visual-invariance optimization remain insufficiently explored.

\begin{figure*}[t]
\begin{center}
 \includegraphics[width=0.95\linewidth]{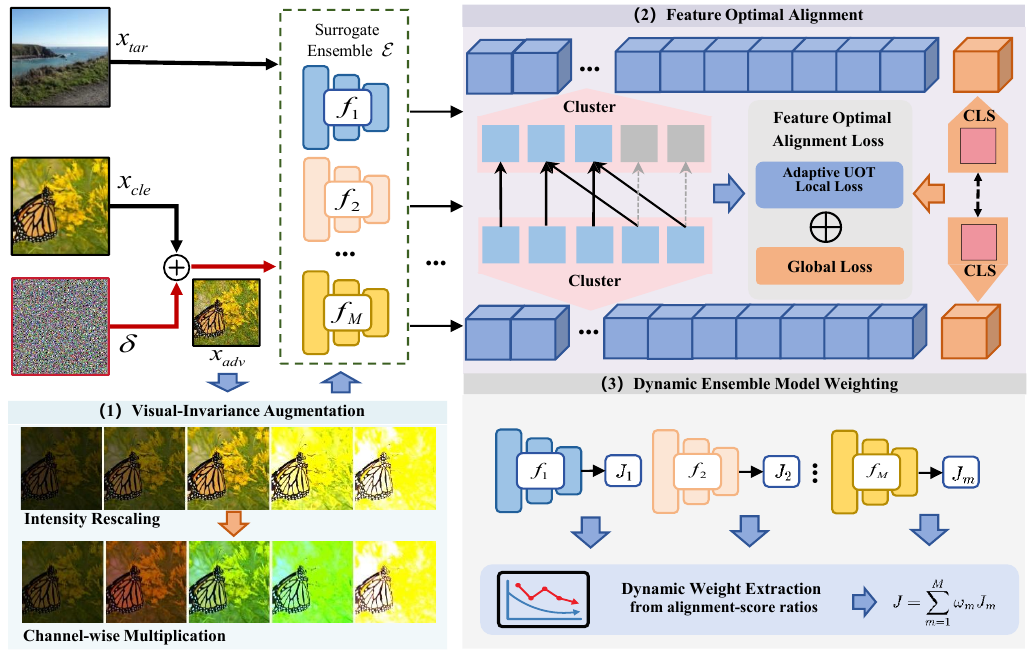}
\end{center}
\vspace{-6mm}
\caption{Overview of IAU-FOA: visual-invariance augmentation, global feature alignment, adaptive unbalanced local transport, and dynamic surrogate weighting.}
        \label{fig:framework}
\end{figure*}

\section{Method}

Adversarial perturbations learned from a single surrogate may capture non-transferable representation biases, which can be alleviated by optimizing over multiple surrogate image encoders to emphasize shared feature directions across models~\citep{rahmatullaev2025universal,yang2025use,wang2026adversarial}. Following this perspective, IAU-FOA adopts a dynamic ensemble framework for targeted transfer attacks against closed-source MLLMs. The proposed framework integrates dual-granularity feature optimal alignment, adaptive unbalanced local transport, dynamic encoder weighting, and visual-invariance augmentation (VIA). By jointly aligning target semantics across multiple surrogate visual encoders and adapting their weights to relative changes in alignment similarity, IAU-FOA encourages adversarial perturbations to capture transferable visual cues shared by foundation models.

\subsection{Problem Formulation}
We study targeted transfer-based attacks against closed-source MLLMs using an ensemble of accessible surrogate visual encoders. Let
\begin{equation}
    \mathcal{F}
    =
    \{f_m\}_{m=1}^{M}
\end{equation}
denote $M$ surrogate visual encoders, including vision-language and vision-only models. For an input image $\mathbf{x}\in\mathcal{X}$, where $\mathcal{X}$ denotes the valid image domain, the global and patch-level representations supplied by the $m$-th feature extractor are denoted as
\begin{equation}
    f_m(\mathbf{x})
    =
    [\mathbf{z}_{m}^{g}(\mathbf{x});
    \mathbf{Z}_{m}^{p}(\mathbf{x})]
    \in
    \mathbb{R}^{(N_m+1)\times d_m},
\end{equation}
where $\mathbf{z}_{m}^{g}(\mathbf{x})\in\mathbb{R}^{d_m}$ is the global representation supplied by the feature extractor (e.g., a [CLS] embedding), and $\mathbf{Z}_{m}^{p}(\mathbf{x})=[\mathbf{z}_{m,1}^{p}(\mathbf{x}),\ldots,\mathbf{z}_{m,N_m}^{p}(\mathbf{x})]^{\top}\in\mathbb{R}^{N_m\times d_m}$ denotes the patch-token features. Given a source image $\mathbf{x}_s$ and a target image $\mathbf{x}_{tar}$, the attacker generates an adversarial example
\begin{equation}
    \mathbf{x}_{adv}
    =
    \mathbf{x}_s+\boldsymbol{\delta},
    \quad
    \|\boldsymbol{\delta}\|_{\infty}\leq \epsilon ,
\end{equation}
where $\boldsymbol{\delta}$ is the bounded adversarial perturbation. The goal is to make the adversarial image near the target image in the feature spaces of all surrogate encoders:
\begin{equation}
    \min_{\|\boldsymbol{\delta}\|_{\infty}\leq\epsilon}
    \sum_{m=1}^{M}
    \mathcal{D}_{m}
    \left(
    f_m(\mathbf{x}_{adv}),
    f_m(\mathbf{x}_{tar})
    \right),
    \label{eq:preliminary_objective}
\end{equation}
where $\mathcal{D}_{m}(\cdot,\cdot)$ denotes the feature-alignment objective on the $m$-th encoder. Since the victim MLLM is closed-source, its parameters and gradients are unavailable, and the perturbation optimized on $\mathcal{F}$ is directly transferred to unseen MLLMs. IAU-FOA optimizes the adversarial perturbation by reducing the feature-alignment loss between the adversarial image and the target image from both global and local perspectives.

\subsection{Overview of the Proposed Framework}
The proposed IAU-FOA consists of four components: visual-invariance augmentation, dual-granularity feature alignment, adaptive unbalanced optimal transport, and dynamic ensemble weighting. The current target view is passed through each surrogate encoder to obtain global and clustered local target features. We use $\mathbf{x}_{tar}$ below to denote the target view supplied to the alignment objective, and suppress spatial preprocessing in the notation. During optimization, multiple photometrically transformed copies of the adversarial image are generated. For each copy, IAU-FOA computes a weighted ensemble objective over surrogate encoders and averages the gradients from all valid copies. The loss minimized at each iteration can be written as
\begin{equation}
\begin{aligned}
    \min_{\|\boldsymbol{\delta}\|_{\infty}\leq \epsilon}
    \frac{1}{|\mathcal{B}|}
    \sum_{\mathcal{T}\in\mathcal{B}}
    \sum_{m=1}^{M}
    \omega_m^{(k(\mathcal{T}))}
    \Big(
    &\mathcal{L}_{g}^{m}(\mathcal{T}(\mathbf{x}_{adv}),\mathbf{x}_{tar})\\
    &+
    \lambda_l
    \mathcal{L}_{l}^{m}(\mathcal{T}(\mathbf{x}_{adv}),\mathbf{x}_{tar})
    \Big),
\end{aligned}
\label{eq:overall_objective}
\end{equation}
where $\mathcal{B}$ is the set of visual-invariance transformations sampled at the current iteration, $k(\mathcal{T})$ indexes the objective evaluation for the transformed copy $\mathcal{T}(\mathbf{x}_{adv})$, $\omega_m^{(k(\mathcal{T}))}$ is the corresponding dynamic weight of the $m$-th surrogate encoder, and $\lambda_l$ controls the contribution of local alignment. The weights are updated at each objective evaluation and held fixed during backpropagation. Eq.~\eqref{eq:overall_objective} summarizes the optimization used to update the perturbation. The outer average makes the perturbation effective under different photometric views $\mathcal{T}(\mathbf{x}_{adv})$, which reduces overfitting to the visual statistics of a single input rendering. For each transformed copy, the inner summation aggregates the attack loss over all surrogate encoders. Within each surrogate, $\mathcal{L}_{g}^{m}$ measures the global feature alignment loss to the target image, while $\mathcal{L}_{l}^{m}$ measures the transported local alignment loss between clustered patch-level structures. Thus, minimizing Eq.~\eqref{eq:overall_objective} drives $\mathbf{x}_{adv}$ toward the target image simultaneously at the semantic level and at the local visual-pattern level. The weight $\omega_m^{(k(\mathcal{T}))}$ is computed from relative changes in alignment similarity scores, as specified in Sec.~\ref{sec:dynamic_weighting}.

\subsection{Visual-Invariance Augmentation}
\label{sec:via}
Real imaging conditions and closed-source MLLM visual preprocessing pipelines often introduce photometric variations, such as changes in exposure, brightness, illumination color, and white balance. Accurately modeling the full physical image formation process is difficult for transferable optimization. Instead, we use two simple, differentiable, and low-cost operations, namely global intensity scaling and channel-wise gain adjustment, to cover these common visual variations.  $\mathcal{T}(\mathbf{x}_{adv})$ denotes a transformed view of the adversarial image. To keep the notation compact, we use $\mathcal{T}$ as a generic augmentation operator in the overall objective, and instantiate it here with photometric transformations. Specifically, each transformation combines an intensity scale $s$ and a channel-wise gain vector $\boldsymbol{\beta}=[\beta_R,\beta_G,\beta_B]^{\top}$. We enumerate the intensity scale $s$ over
\begin{equation}
    \mathcal{S}
    =
    \{2^{k}\mid k=-h,\ldots,h\},
\end{equation}
where $h$ is a nonnegative integer controlling the range of intensity scaling, and $k$ indexes the $2h+1$ scales. Scaling darkens ($s<1$), brightens ($s>1$), or preserves ($s=1$) image intensity. By default, $h=2$, giving five bidirectional intensity scales $\mathcal{S}=\{0.25,0.5,1,2,4\}$. Given $s\in\mathcal{S}$ and $\boldsymbol{\beta}\in\mathbb{R}^{3}$, the concrete operator used in Eq.~\eqref{eq:overall_objective} is
\begin{equation}
    \mathcal{T}(\mathbf{x};s,\boldsymbol{\beta})
    =
    \boldsymbol{\beta}\odot
    (s\mathbf{x}),
    \label{eq:augmentation}
\end{equation}
where $\odot$ denotes channel-wise multiplication. For an RGB image $\mathbf{x}\in\mathbb{R}^{H\times W\times 3}$, this operation is defined as
\begin{equation}
    [\mathcal{T}(\mathbf{x};s,\boldsymbol{\beta})]_{u,v,c}
    =
    \beta_c \cdot s \cdot \mathbf{x}_{u,v,c},
    \quad
    c\in\{R,G,B\}.
\end{equation}
Here, $H$ and $W$ denote the image height and width, $u$ and $v$ index the spatial pixel location, and $c$ indexes the RGB channel. The global scale $s$ is shared across all pixels and channels, while $\beta_c$ is the gain applied to all pixels in channel $c$. For each transformed copy, the channel gains are independently sampled as
\begin{equation}
    \beta_c
    \sim
    \mathcal{U}(1-s_{\beta},1+s_{\beta}),
    \quad
    c\in\{R,G,B\}.
\end{equation}
Here, $\mathcal{U}(a,b)$ denotes the uniform distribution over $[a,b]$, and $s_{\beta}$ controls the amplitude of channel-wise gain variations around one. We set $s_{\beta}=0.5$ by default, giving a gain range of $[0.5,1.5]$; $s_{\beta}=0$ disables channel-wise gain adjustment.

The global scale $s$ approximates changes in brightness and exposure, while the channel gains $\boldsymbol{\beta}$ approximate white-balance and illumination-color shifts. The transformation set $\mathcal{B}$ in Eq.~\eqref{eq:overall_objective} is constructed by enumerating $s\in\mathcal{S}$ and sampling a channel-gain vector $\boldsymbol{\beta}$ for each scale. These transformations simulate exposure, illumination, and color-temperature changes. In each iteration, gradients from all transformed copies are averaged before updating the perturbation, which encourages the adversarial example to remain effective under heterogeneous visual preprocessing pipelines. Each transformed copy is then passed to the surrogate encoders to generate adversarial examples. 

\subsection{Global Feature Alignment}
The global branch uses the encoder-specific global feature to provide coarse-grained semantic guidance. For the $m$-th encoder, the global feature is first normalized as
\begin{equation}
    \hat{\mathbf{z}}_{m}^{g}(\mathbf{x})
    =
    \frac{\mathbf{z}_{m}^{g}(\mathbf{x})}
    {\|\mathbf{z}_{m}^{g}(\mathbf{x})\|_2}.
\end{equation}
Let $\mathbf{X}_{m}=\hat{\mathbf{z}}_{m}^{g}(\mathbf{x}_{adv})$ and $\mathbf{Y}_{m}=\hat{\mathbf{z}}_{m}^{g}(\mathbf{x}_{tar})$ denote the normalized global features of the adversarial and target images, respectively. The global alignment loss is defined by the cosine distance between these two features:
\begin{equation}
\begin{aligned}
    \mathcal{L}_{g}^{m}(\mathbf{x}_{adv},\mathbf{x}_{tar})
    &=
    1-\cos(\mathbf{X}_{m},\mathbf{Y}_{m}) \\
    &=
    1-
    \frac{
    \langle\mathbf{X}_{m},\mathbf{Y}_{m}\rangle
    }{
    \|\mathbf{X}_{m}\|_2\|\mathbf{Y}_{m}\|_2
    },
\end{aligned}
    \label{eq:global_score}
\end{equation}
where $\langle\mathbf{X}_{m},\mathbf{Y}_{m}\rangle$ denotes the inner product between the two feature vectors, and $\|\cdot\|_2$ denotes the $\ell_2$ norm, i.e., the Euclidean length of a vector. Minimizing Eq.~\eqref{eq:global_score} increases the semantic similarity between the adversarial image and the target image in the global feature representation space. 

\subsection{Local Clustering Feature Alignment}
Patch tokens retain spatially distributed visual details beyond a single global representation. However, all patch tokens also include substantial redundant image information, and directly matching them may introduce considerable noisy correspondences, degrading the quality of the generated adversarial examples. To reduce redundancy while preserving discriminative local information, we cluster patch-token features into compact local visual patterns before feature alignment.

For the $m$-th encoder, the fine-grained local features of the adversarial and target images are extracted from the patch-token feature matrices:
\begin{equation}
\begin{aligned}
    \mathbf{X}_{m}^{loc}
    &=
    \mathbf{Z}_{m}^{p}(\mathbf{x}_{adv})
    \in
    \mathbb{R}^{N_m\times d_m}, \\
    \mathbf{Y}_{m}^{loc}
    &=
    \mathbf{Z}_{m}^{p}(\mathbf{x}_{tar})
    \in
    \mathbb{R}^{N_m\times d_m},
\end{aligned}
\end{equation}
where each row corresponds to the feature of one image patch. We then apply clustering to the local features of the adversarial and target images, respectively. Here, $\operatorname{KMeans}(\mathbf{A},K)$ denotes the standard K-means clustering operator applied to the row vectors of $\mathbf{A}$, which partitions them into $K$ clusters by minimizing the within-cluster squared Euclidean distance and returns the cluster centroids:
\begin{equation}
\begin{aligned}
    \mathbf{X}_{m}^{clu}
    &=
    \operatorname{KMeans}\left(\mathbf{X}_{m}^{loc},K\right), \\
    \mathbf{Y}_{m}^{clu}
    &=
    \operatorname{KMeans}\left(\mathbf{Y}_{m}^{loc},K\right),
\end{aligned}
\end{equation}
where $\mathbf{X}_{m}^{clu}=[\mathbf{x}_{m,1}^{clu},\ldots,\mathbf{x}_{m,K}^{clu}]^{\top}\in\mathbb{R}^{K\times d_m}$ and $\mathbf{Y}_{m}^{clu}=[\mathbf{y}_{m,1}^{clu},\ldots,\mathbf{y}_{m,K}^{clu}]^{\top}\in\mathbb{R}^{K\times d_m}$ are the adversarial and target cluster centers. We assign uniform reference mass to each cluster center on both sides. The mass vectors $\mathbf{a}_{m}^{adv}$ and $\mathbf{a}_{m}^{t}$ are defined independently of cluster cardinalities as
\begin{equation}
\begin{aligned}
    a_{m,k}^{adv}
    =
    \frac{1}{K},
    \qquad
    a_{m,k}^{t}
    =
    \frac{1}{K},
    \quad
    k=1,\ldots,K .
\end{aligned}
    \label{eq:cluster_mass}
\end{equation}
Both mass vectors are normalized to sum to one. Following the theory of optimal transport~\citep{villani2009optimal}, we model fine-grained local alignment as a transportation problem between two sets of clustered local patterns. For the $m$-th encoder, the adversarial local distribution is represented by $\{(\mathbf{x}_{m,i}^{clu},a_{m,i}^{adv})\}_{i=1}^{K}$, while the target local distribution is represented by $\{(\mathbf{y}_{m,j}^{clu},a_{m,j}^{t})\}_{j=1}^{K}$. The goal is to find a transport plan that moves the mass of adversarial local patterns to target local patterns with minimum matching cost. Let $\mathbf{P}_{m}(i,j)$ denote the mass transported from the $i$-th adversarial cluster to the $j$-th target cluster, and let $c_{ij}=c(\mathbf{x}_{m,i}^{clu},\mathbf{y}_{m,j}^{clu})$ be their matching cost. The balanced OT form of local alignment is
\begin{equation}
\begin{aligned}
    \min_{\mathbf{P}_{m}}\quad
    &\sum_{i=1}^{K}\sum_{j=1}^{K}
    \mathbf{P}_{m}(i,j)c_{ij},\\
    \mathrm{s.t.}\quad
    &\sum_{j=1}^{K}\mathbf{P}_{m}(i,j)=a_{m,i}^{adv},
    \quad \forall i,\\
    &\sum_{i=1}^{K}\mathbf{P}_{m}(i,j)=a_{m,j}^{t},
    \quad \forall j,\\
    &\mathbf{P}_{m}(i,j)\ge 0,
    \quad \forall i,j .
\end{aligned}
    \label{eq:balanced_ot}
\end{equation}
The marginal constraints in Eq.~\eqref{eq:balanced_ot} preserve the uniform reference masses on both sides. However, strict balanced constraints require prescribed masses even for clusters without reliable counterparts, potentially introducing misleading local alignment gradients. This motivates the adaptive unbalanced formulation introduced below. Specifically, the ground cost is defined by the cosine distance between local cluster centers. We therefore first construct the pairwise similarity matrix
\begin{equation}
    \mathbf{S}_{m}(i,j)
    =
    \cos\left(
    \mathbf{x}_{m,i}^{clu},
    \mathbf{y}_{m,j}^{clu}
    \right).
    \label{eq:similarity_matrix}
\end{equation}
Here, $\mathbf{S}_{m}(i,j)$ measures the cosine similarity between the $i$-th adversarial cluster and the $j$-th target cluster, and the corresponding transport cost is set to $c_{ij}=1-\mathbf{S}_{m}(i,j)$. Minimizing this objective favors low-cost correspondences subject to the prescribed marginals.

\subsection{Adaptive Unbalanced Optimal Transport}
\label{sec:adaptive_uot}
The balanced formulation in Eq.~\eqref{eq:balanced_ot} provides a basic local alignment objective in which every cluster mass is exactly transported according to fixed marginals. With entropic regularization, the balanced OT problem can be solved using Sinkhorn iterations. However, exact marginal preservation can be too restrictive for targeted adversarial optimization: background clusters, ambiguous regions, or surrogate-specific patch patterns may be forced to match target clusters even when no reliable correspondence exists. Such forced matches may steer perturbations toward incidental similarities specific to the surrogate, limiting their transfer to other encoders. We therefore use an adaptive unbalanced Sinkhorn solver with weaker marginal regularization for low-confidence clusters and stronger regularization for high-confidence clusters. This design aims to reduce unreliable alignment by allowing transported masses to adapt to the available correspondences.

Starting from the cosine-distance cost $c_{ij}=1-\mathbf{S}_{m}(i,j)$ defined above, the entropic transport kernel is
\begin{equation}
    \mathbf{K}_{m}(i,j)
    =
    \exp\left(
    -\frac{1-\mathbf{S}_{m}(i,j)}{\varepsilon_{ot}}
    \right),
    \label{eq:transport_kernel}
\end{equation}
where $\varepsilon_{ot}>0$ is the entropy regularization coefficient, also serving as the temperature of the Sinkhorn kernel, and is set to $0.1$ in all experiments. For each adversarial-side cluster on the adaptively relaxed side, the confidence is estimated by its best target counterpart:
\begin{equation}
    r_i
    =
    \max\left\{0,\max_{j}\mathbf{S}_{m}(i,j)\right\}.
    \label{eq:confidence}
\end{equation}
We normalize the raw confidence with a stabilized mean:
\begin{equation}
    \hat{r}_i
    =
    \frac{r_i}
    {\frac{1}{K}\sum_{k=1}^{K}r_k+\eta}.
    \label{eq:normalized_confidence}
\end{equation}
Here, $\eta=10^{-8}$. When the mean raw confidence is well above $\eta$, the normalized confidences have a mean close to one, modulating marginal regularization around its base level. If all raw confidences are zero, every $\hat r_i$ is zero; with $\gamma=1$, this gives $\rho_i^a=\phi_i^a=0$ while the target-side penalty remains unchanged.
We use $\rho>0$ as a hyperparameter controlling the KL penalty on marginal deviations, with $\rho=0.2$ by default. Accordingly, the base unbalanced relaxation coefficient is
\begin{equation}
    \phi=\frac{\rho}{\rho+\varepsilon_{ot}},
\end{equation}
where a larger $\rho$ gives a larger $\phi$ and enforces marginal preservation more strongly. The default settings give $\phi=2/3$. With mean-normalized adaptive confidence, the adversarial-side relaxation coefficient becomes
\begin{equation}
\begin{aligned}
    \rho_i^{a}
    &=
    \rho\left[(1-\gamma)+\gamma\hat{r}_i\right],\\
    \phi_i^{a}
    &=
    \frac{
    \rho\left[(1-\gamma)+\gamma\hat{r}_i\right]
    }{
    \rho\left[(1-\gamma)+\gamma\hat{r}_i\right]
    +
    \varepsilon_{ot}
    }.
\end{aligned}
    \label{eq:adaptive_phi}
\end{equation}
where $\gamma\in[0,1]$ controls the strength of confidence modulation and is set to $1.0$ in all experiments. When $\gamma=0$, Eq.~\eqref{eq:adaptive_phi} reduces to $\phi_i^{a}=\rho/(\rho+\varepsilon_{ot})=\phi$, exactly matching plain unbalanced OT. With the adopted setting $\gamma=1$, we have $\rho_i^{a}=\rho\hat{r}_i$ and $\phi_i^{a}=\rho\hat{r}_i/(\rho\hat{r}_i+\varepsilon_{ot})$. Thus, high-confidence clusters with $\hat{r}_i>1$ receive stronger marginal constraints, whereas low-confidence clusters with $\hat{r}_i<1$ are relaxed further. We denote $\boldsymbol{\phi}^{a}=[\phi_1^{a},\ldots,\phi_K^{a}]^{\top}$. The target side uses the base coefficient $\phi$.

When $\rho$ and all $\rho_i^a$ tend to infinity, the solution approaches entropy-regularized balanced OT. With $\gamma=1$, however, a zero-confidence cluster remains unpenalized for any $\rho$.

For fixed similarities and confidence coefficients, we numerically approximate the minimizer of the entropy-regularized problem
\[
\begin{aligned}
\min_{\mathbf P\ge0}\quad
&\sum_{i,j}P_{ij}c_{ij}
+\varepsilon_{ot}\sum_{i,j}P_{ij}(\log P_{ij}-1)\\
&+\sum_i\rho_i^a\,d\!\left((\mathbf P\mathbf 1)_i\mid a_{m,i}^{adv}\right)\\
&+\rho\sum_j d\!\left((\mathbf P^\top\mathbf 1)_j\mid a_{m,j}^{t}\right),
\end{aligned}
\]
where $d(x\mid y)=x\log(x/y)-x+y$ is the scalar generalized KL divergence for $y>0$, with $0\log(0/y)=0$. The entropy uses the same zero-mass convention. Confidence coefficients are fixed during each transport solve.

We retain the convention that rows correspond to adversarial clusters and columns to target clusters. The implementation stores the transposed matrix, with target clusters first. Under the notation used here, it initializes $\mathbf u^{(0)}=\mathbf v^{(0)}=\mathbf 1$ and updates target-side scaling before adversarial-side scaling:
\begin{equation}
\begin{aligned}
    \mathbf{v}^{(\ell+1)}
    &=\left(\mathbf{a}_{m}^{t}\oslash
       (\mathbf{K}_{m}^{\top}\mathbf{u}^{(\ell)})\right)^{\circ\phi},\\
    \mathbf{u}^{(\ell+1)}
    &=\left(\mathbf{a}_{m}^{adv}\oslash
       (\mathbf{K}_{m}\mathbf{v}^{(\ell+1)})\right)^{\circ\boldsymbol{\phi}^{a}},
\end{aligned}
\label{eq:sinkhorn_update}
\end{equation}
where $\oslash$ denotes element-wise division and $^{\circ}$ denotes element-wise exponentiation; $\boldsymbol{\phi}^{a}$ acts element-wise on the adversarial-side row scaling $\mathbf{u}$, while the scalar exponent $\phi$ is applied to every entry of the target-side column scaling $\mathbf{v}$. The solver runs for at most 100 iterations and stops when $K^{-1}\|\mathbf v^{(\ell+1)}-\mathbf v^{(\ell)}\|_1<10^{-2}$. We use the final scaling vectors to form the numerical plan $\mathbf P_m^*$. The resulting transport plan balances matching costs with confidence-dependent penalties on marginal deviations:
\begin{equation}
    \mathbf{P}_{m}^{*}
    =
    \mathrm{diag}(\mathbf{u})\mathbf{K}_{m}\mathrm{diag}(\mathbf{v}).
    \label{eq:transport_plan}
\end{equation}
To compute the local alignment term, the implementation normalizes the transport plan by its total mass:
\[
    \widetilde{\mathbf{P}}_m
    =\frac{\mathbf{P}_m^*}{\sum_{i,j}\mathbf{P}_m^*(i,j)+\eta},
    \qquad \eta=10^{-8}.
\]
The transport plan and its confidence modulation are computed without gradient tracking. The local alignment loss is the normalized transported cosine distance:
\begin{equation}
    \mathcal{L}_{l}^{m}
    =
    \sum_{i,j}\widetilde{\mathbf{P}}_{m}(i,j)
    \left(1-\mathbf{S}_{m}(i,j)\right).
    \label{eq:local_score}
\end{equation}
The loss in Eq.~\eqref{eq:local_score} measures local matching cost using the normalized plan. The entropy and marginal penalties determine the raw plan $\mathbf{P}_m^*$ and are not added to this loss. Gradients of the local loss pass through the similarities while the transport weights are held fixed. Thus, the normalized plan weights the pairwise similarity gradients that guide the perturbation update. Confidence-adaptive relaxation changes these weights to accommodate unreliable correspondences; its effect on transferability is evaluated experimentally. 

\par Hence, the final loss of the proposed IAU-FOA can be formulated as a combination of the global semantic alignment loss and the adaptive local transport loss. For the $m$-th surrogate image encoder, the loss is defined as:
\begin{equation}
    \mathcal{J}_{m}
    =
    \mathcal{L}_{g}^{m}
    +
    \lambda_l
    \mathcal{L}_{l}^{m},
    \label{eq:per_view_objective}
\end{equation}
where $\lambda_l$ is a hyperparameter that balances the contribution of local transport alignment. We use the same fixed number of clusters for adversarial and target patch features throughout optimization, with $K=10$ by default. The sensitivity to $K$ is evaluated in Fig.~\ref{fig:k_semsitivity}.

\subsection{Dynamic Ensemble Model Weighting}
\label{sec:dynamic_weighting}
To improve adversarial transferability, adversarial examples are generated using the ensemble loss on several surrogate encoders. It is computed as follows:
\begin{equation}
    \mathcal{J}
    =
    \sum_{m=1}^{M}
    \omega_m
    \mathcal{J}_{m}.
    \label{eq:iteration_objective}
\end{equation}
where $\mathcal{J}_{m}$ is the distance loss in Eq.~\eqref{eq:per_view_objective}. Dynamic weights are computed from alignment similarity scores, which are distinct from these distance losses. For the $k$-th evaluation of the ensemble objective, define
\begin{equation}
\begin{aligned}
    A_m^{(k)}
    &=\cos\!\left(\hat{\mathbf{z}}_m^{g}(\mathbf{x}^{c}),
                   \hat{\mathbf{z}}_m^{g}(\mathbf{x}_{tar})\right)\\
    &\quad+\lambda_l\sum_{i,j}\widetilde{\mathbf{P}}_m(i,j)\mathbf{S}_m(i,j),
\end{aligned}
\label{eq:alignment_score}
\end{equation}
where $\mathbf{x}^{c}$ is the current augmented input and the local similarities and transport plan are evaluated for this input. Larger $A_m^{(k)}$ indicates stronger feature alignment. The dynamic weights are calculated from the score ratios defined below. We compute the similarity-score ratio between consecutive evaluations as
\begin{equation}
    q_m^{(k)}
    =\frac{A_m^{(k)}}{A_m^{(k-1)}+\eta},
    \label{eq:progress_ratio}
\end{equation}
where $\eta=10^{-8}$ stabilizes the denominator. For the first evaluation, we set $A_m^{(0)}=A_m^{(1)}$. Here, $k$ indexes successive objective evaluations, including evaluations of different augmented views within the same attack iteration. The ratio reflects score changes across evaluations, which can also arise from differences between augmented views. The weights are
\begin{equation}
    \omega_m^{(k)}
    =M\cdot\frac{\exp(q_m^{(k)}/\tau_w)}
    {\sum_{n=1}^{M}\exp(q_n^{(k)}/\tau_w)},
    \label{eq:dynamic_weight}
\end{equation}
where $\tau_w=1.0$ and the factor $M$ ensures that the weights sum to $M$, matching the scale of equal weighting. For positive scores with $A_m^{(k-1)}$ well above $\eta$, a larger ratio indicates a larger current score relative to the preceding evaluation and receives a larger weight. This rule adapts the ensemble to observed score changes. The complete optimization procedure is summarized in Algorithm~\ref{alg:iaufoa} in the Appendix.
\subsection{Transferability Analysis}
\label{sec:transfer_analysis}
Prior studies relate adversarial transferability to gradient similarity and model smoothness~\citep{yang2021trs}, knowledge transferability~\citep{liang2021uncovering}, and ensemble transferability error~\citep{yao2024understanding}. We compare balanced OT with uniform local matching at a fixed feasible perturbation $\boldsymbol\delta$, holding the encoder, augmentation view, target features, and cluster assignments fixed.

Write $S_{ij}=\mathbf S_m(i,j)$ and $c_{ij}=1-S_{ij}$. Assume that the pairwise similarities are differentiable at $\boldsymbol\delta$. Let $\mathcal J_v$ be a differentiable victim alignment objective, with larger values indicating stronger target alignment, and set $\mathbf g_v=\nabla\mathcal J_v(\boldsymbol\delta)$ and $\mathbf g_{ij}=\nabla_{\boldsymbol\delta}S_{ij}$. For a nonnegative unit-mass plan $\mathbf Q$ held fixed during differentiation, define
\begin{equation}
    \mathbf G(\mathbf Q):=\sum_{i,j}Q_{ij}\mathbf g_{ij}.
    \label{eq:ot_local_direction}
\end{equation}
Here, $\mathbf G(\mathbf Q)$ is the negative gradient of the local alignment loss with the plan held fixed. The uniform reference plan is $Q_{0,ij}=1/K^2$.

\begin{assumption}[Surrogate cost separation]
    \label{ass:1}
    The $K\geq2$ clusters admit a semantic correspondence permutation $\pi$, specified independently of the transport solution. The surrogate costs distinguish these correspondences with a gap $\Delta>0$:
    \begin{equation}
        c_{ij}\geq c_{i\pi(i)}+\Delta,\qquad j\neq\pi(i).
        \label{eq:ot_cost_gap}
    \end{equation}
\end{assumption}

\begin{assumption}[Cross-model gradient separation]
    \label{ass:2}
    There exist $A>0$, $B\geq0$, and deviations $|e_{ij}|\leq\xi$ with $\xi\geq0$ such that
    \begin{equation}
        \langle\mathbf g_v,\mathbf g_{ij}\rangle
        =\begin{cases}
            A+e_{ij},&j=\pi(i),\\
            -B+e_{ij},&j\neq\pi(i).
        \end{cases}
        \label{eq:ot_gradient_separation}
    \end{equation}
\end{assumption}

Define the noncorresponding mass
\begin{equation}
    q(\mathbf Q)=\sum_i\sum_{j\neq\pi(i)}Q_{ij},
    \qquad q_0:=q(\mathbf Q_0)=1-\frac1K.
    \label{eq:ot_noncorresponding_mass}
\end{equation}
\begin{theorem}
    \label{thm:ot_transfer}
Let $H(\mathbf Q)=-\sum_{i,j}Q_{ij}\log Q_{ij}$, with $0\log0=0$. For $\varepsilon_{ot}\geq0$, let $\mathbf Q_{ot}$ minimize
\begin{equation}
    \sum_{i,j}Q_{ij}c_{ij}-\varepsilon_{ot}H(\mathbf Q)
    \label{eq:ot_theory_objective}
\end{equation}
over nonnegative balanced plans satisfying $\sum_jQ_{ij}=1/K$ for every $i$ and $\sum_iQ_{ij}=1/K$ for every $j$.

    Under Assumption~\ref{ass:1},
    \begin{equation}
        q(\mathbf Q_{ot})\leq\bar q
        :=\min\!\left\{1,\frac{\varepsilon_{ot}\log K}{\Delta}\right\}.
        \label{eq:ot_mismatch_bound}
    \end{equation}
    If Assumption~\ref{ass:2} also holds and
    \begin{equation}
        \frac{\varepsilon_{ot}\log K}{\Delta}
        +\frac{2\xi}{A+B}<1-\frac{1}{K},
        \label{eq:ot_joint_transfer_condition}
    \end{equation}
    then
    \begin{equation}
        \langle\mathbf g_v,\mathbf G(\mathbf Q_{ot})-\mathbf G(\mathbf Q_0)\rangle
        \geq\kappa,
        \label{eq:ot_direction_advantage}
    \end{equation}
    where
    \begin{equation}
        \kappa:=(A+B)(q_0-\bar q)-2\xi>0.
        \label{eq:ot_direction_margin}
    \end{equation}
    Under these conditions, let
    \begin{equation}
        \mathbf d_r=\mathbf h+\lambda_l\mathbf G(\mathbf Q_r),
        \quad r\in\{ot,0\},
        \label{eq:ot_update_direction}
    \end{equation}
    where $\mathbf h$ is the common gradient contribution from the other branches and $\lambda_l>0$ is the common local weight. Suppose $\|\mathbf d_r\|_2\leq D$ for some $D>0$, and $\mathcal J_v$ has an $L$-Lipschitz gradient along both update segments, with $L\geq0$. Choose $\alpha>0$ so that both updates remain feasible and, when $L>0$, $\alpha<\lambda_l\kappa/(LD^2)$. Then
    \begin{equation}
    \begin{aligned}
        &\mathcal J_v(\boldsymbol\delta+\alpha\mathbf d_{ot})
        -\mathcal J_v(\boldsymbol\delta+\alpha\mathbf d_0)\\
        &\qquad\geq\tau,
    \end{aligned}
        \label{eq:ot_victim_comparison}
    \end{equation}
    where
    \begin{equation}
        \tau:=\alpha\lambda_l\kappa-L\alpha^2D^2>0.
        \label{eq:ot_finite_step_margin}
    \end{equation}
\end{theorem}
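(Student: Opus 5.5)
The proof splits into three parts: a mass bound, a directional bound, and a descent-lemma comparison.

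\emph{Mass bound.} We compare $\mathbf Q_{ot}$ with the permutation plan $\mathbf Q_\pi$, defined by $Q_{\pi,ij}=\frac1K\mathbb 1[j=\pi(i)]$. This plan is balanced and has entropy $H(\mathbf Q_\pi)=\log K$. By optimality of $\mathbf Q_{ot}$ for \eqref{eq:ot_theory_objective},
\[
\textstyle\sum_{ij}Q_{ot,ij}c_{ij}-\sum_{ij}Q_{\pi,ij}c_{ij}\le\varepsilon_{ot}\bigl(H(\mathbf Q_{ot})-\log K\bigr)\le\varepsilon_{ot}\log K,
\]
since any balanced plan with $K^2$ entries has entropy at most $2\log K$. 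To lower-bound the left side, we decompose row by row. Row $i$ of $\mathbf Q_{ot}$ has mass $1/K$, and Assumption~\ref{ass:1} gives
\[
\textstyle\sum_j Q_{ot,ij}c_{ij}\ge \frac1K c_{i\pi(i)}+\Delta\sum_{j\ne\pi(i)}Q_{ot,ij}.
\]
Summing over $i$ yields cost difference $\ge\Delta\,q(\mathbf Q_{ot})$, hence $q(\mathbf Q_{ot})\le\varepsilon_{ot}\log K/\Delta$. Together with the trivial bound $q\le1$, this is \eqref{eq:ot_mismatch_bound}. The case $\varepsilon_{ot}=0$ is covered by the same argument.

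\emph{Directional advantage.} For any unit-mass plan $\mathbf Q$, Assumption~\ref{ass:2} gives
\[
\langle\mathbf g_v,\mathbf G(\mathbf Q)\rangle=A(1-q(\mathbf Q))-B\,q(\mathbf Q)+\textstyle\sum_{ij}Q_{ij}e_{ij},
\]
and the error term has absolute value at most $\xi$. Differencing the two plans,
\[
\langle\mathbf g_v,\mathbf G(\mathbf Q_{ot})-\mathbf G(\mathbf Q_0)\rangle\ge(A+B)\bigl(q_0-q(\mathbf Q_{ot})\bigr)-2\xi\ge(A+B)(q_0-\bar q)-2\xi=\kappa.
\]
Positivity of $\kappa$ follows from \eqref{eq:ot_joint_transfer_condition}. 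If $\bar q=\varepsilon_{ot}\log K/\Delta$, the condition rearranges directly. If $\bar q=1$, then $\varepsilon_{ot}\log K/\Delta\ge1>q_0$, which contradicts the condition, so this case cannot occur. The main subtlety is exactly this handling of the $\min$ in $\bar q$.

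\emph{Finite-step comparison.} We apply the descent lemma (quadratic bounds from the $L$-Lipschitz gradient) to $\mathcal J_v$ along each segment:
\[
\mathcal J_v(\boldsymbol\delta+\alpha\mathbf d_{ot})\ge\mathcal J_v(\boldsymbol\delta)+\alpha\langle\mathbf g_v,\mathbf d_{ot}\rangle-\tfrac L2\alpha^2D^2,
\]
\[
\mathcal J_v(\boldsymbol\delta+\alpha\mathbf d_0)\le\mathcal J_v(\boldsymbol\delta)+\alpha\langle\mathbf g_v,\mathbf d_0\rangle+\tfrac L2\alpha^2D^2.
\]
Subtracting, the common $\mathbf h$ term cancels, since $\mathbf d_{ot}-\mathbf d_0=\lambda_l(\mathbf G(\mathbf Q_{ot})-\mathbf G(\mathbf Q_0))$. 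The difference is therefore at least $\alpha\lambda_l\kappa-L\alpha^2D^2=\tau$. Finally, $\tau>0$ holds when $L>0$ by the step-size choice $\alpha<\lambda_l\kappa/(LD^2)$, and trivially when $L=0$.
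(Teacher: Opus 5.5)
Your proposal is correct and follows the paper's proof essentially step for step. It compares against the permutation plan using $H(\mathbf Q^{\dagger})=\log K$ and $H(\mathbf Q_{ot})\le 2\log K$, decomposes $\langle\mathbf g_v,\mathbf G(\mathbf Q)\rangle=A-(A+B)q(\mathbf Q)+E(\mathbf Q)$, and applies the two-sided $L$-smoothness bound with $\mathbf h$ cancelling. Your case split on the $\min$ in $\bar q$ is valid but not needed: since $\bar q\le\varepsilon_{ot}\log K/\Delta$ always holds, the paper obtains $\kappa>0$ from this inequality directly.
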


Thus, under the stated assumptions and step-size condition, OT produces stronger target alignment on the victim model than uniform local matching after one feasible ordinary gradient step from the same perturbation, with an objective gap of at least $\tau>0$. This conditional advantage suggests that OT-based local alignment can improve the effectiveness of adversarial examples for targeted transfer, consistent with the empirical gains of the complete attack reported in Section~\ref{sec:experiments}. The proof and discussion of projected sign updates are provided in the Appendix (Section~\ref{sec:appendix_theory}).


\section{Experiments}
\label{sec:experiments}
\subsection{Experimental Setup}
\par \noindent \textbf{Datasets.} Following prior studies~\citep{dong2023robust,li2026frustratingly,jia2026adversarial}, we adopt 1,000 clean source images resized to $224 \times 224 \times 3$ from the NIPS 2017 Adversarial Attacks and Defenses Competition dataset\footnote{\url{https://nips.cc/Conferences/2017/CompetitionTrack}}. For the targeted setting, 1,000 images are randomly sampled from the MSCOCO validation set~\citep{lin2014microsoft} as target images.

\par \noindent \textbf{Implementation Settings.} Following~\citep{li2026multi,wang2026adversarial}, we generate adversarial examples using five surrogate models: three CLIP variants, ViT-B/16~\citep{radford2021learning}, ViT-B/32~\citep{radford2021learning}, and ViT-G/14 (LAION-2B)~\citep{radford2021learning}, together with InternVL3-1B~\citep{zhu2025internvl3} and DINOv2~\citep{oquab2024dinov}. For IAU-FOA, unless otherwise specified, the perturbation budget is $\epsilon=16/255$ under the $\ell_{\infty}$ norm, with a step size of $1/255$ and 300 attack iterations. The number of local clusters is fixed at $K=10$ throughout optimization. Local cluster reference masses are uniform, and the local-loss weight is $\lambda_l=2.0$. Adaptive UOT uses $\rho=0.2$, $\varepsilon_{ot}=0.1$, and $\gamma=1$, with mean-normalized confidence and total-mass normalization of the transport plan. Dynamic weighting uses $\tau_w=1.0$ and a numerical stabilizer of $10^{-8}$. VIA uses intensity scales $\{0.25,0.5,1,2,4\}$ ($h=2$) and a channel-gain amplitude of $s_{\beta}=0.5$. Our main comparison covers twelve MLLMs, comprising six open-source models (Qwen2.5-VL-7B/72B, InternVL3-14B/78B, MiniCPM-o-2.6, LLaVA-1.6-7B) and six closed-source models (Claude-Sonnet-4.6/Opus-4.8, GPT-5.4/5.5, Gemini-3.1-Flash-Lite/3.5-Flash). An additional evaluation on five recent commercial MLLMs brings the total to seventeen targets. For all evaluations, the prompt is fixed as ``Describe this image, no longer than 25 words''. Experiments are conducted on an Ubuntu system equipped with an NVIDIA H100 Tensor Core GPU with 80~GB of GPU memory. 

\par \noindent \textbf{Competitive Methods.}
We compare the proposed IAU-FOA with seven representative targeted transfer attacks for MLLMs, including AdvDiffVLM~\citep{guo2024efficient}, AnyAttack~\citep{zhang2025anyattack}, COA~\citep{xie2025chain}, M-Attack~\citep{li2026frustratingly}, FOA-Attack~\citep{jia2026adversarial}, M-Attack-v2~\citep{zhao2026pushing}, and MPCAttack~\citep{li2026multi}. For baselines other than AdvDiffVLM, we use the same surrogate models, perturbation budget, and attack iterations as IAU-FOA; the remaining method-specific settings follow their original papers. AdvDiffVLM follows the settings of its original paper~\citep{guo2024efficient} and is not constrained by the $\ell_{\infty}$ budget of $16/255$. All methods use the same evaluation protocol described below.

\par \noindent \textbf{Evaluation metrics.} Following~\citep{li2026frustratingly,jia2026adversarial}, we evaluate targeted transferability with an LLM-as-a-judge protocol. For each adversarial--target pair, the target MLLM first produces captions for both images, and GPTScore is then used to quantify their semantic similarity by using GPT-4o. We regard an attack as successful when the similarity score is at least 0.5, indicating that the adversarial image is judged to share the target subject. In addition to the attack success rate (ASR), we report the average similarity score (AvgSim) to reflect overall target alignment; higher values of both metrics indicate better targeted transferability.


\subsection{Hyper-parameter Selection}
\begin{figure*}[t]
    \centering
    \includegraphics[width=\linewidth]{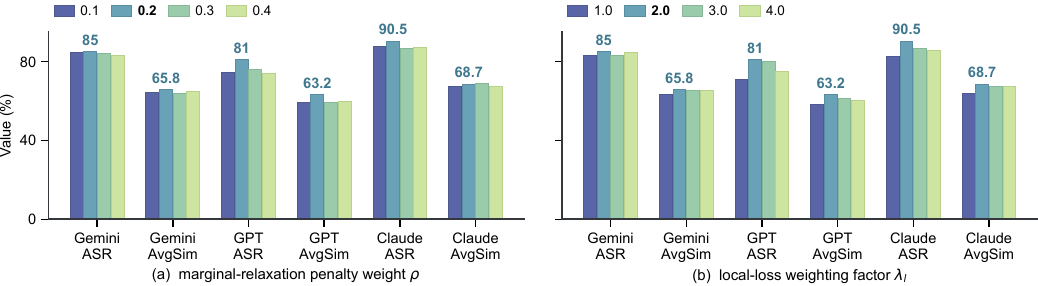}
    \vspace{-7mm}
    \caption{Sensitivity to (a) the marginal-relaxation penalty $\rho$ and (b) the local-loss weight $\lambda_l$, evaluated on 100 source--target image pairs per model. ASR and AvgSim are averaged over two models per family (Gemini, GPT, and Claude). Bold legend entries mark the selected settings, $\rho=0.2$ and $\lambda_l=2.0$; numerical labels show metric values.}
    \label{fig:params}
\end{figure*}

\begin{figure}[t]
\begin{center}
 \includegraphics[width=1.0\linewidth]{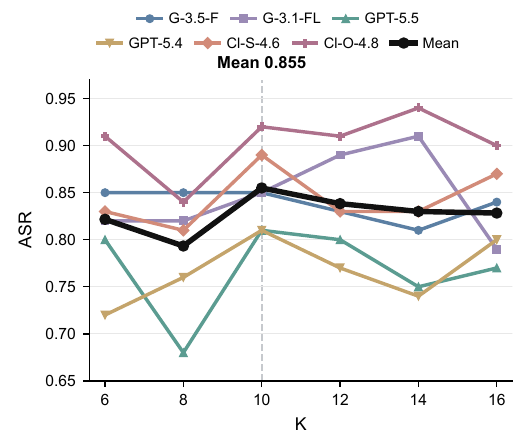}
\end{center}
\vspace{-7mm}
\caption{Sensitivity to the number of local clusters $K$, evaluated on 100 source--target image pairs per model. Colored curves show ASR on six commercial MLLMs; the thick black curve shows their mean. The dashed line marks the default $K=10$, which maximizes mean ASR among the tested values.}
        \label{fig:k_semsitivity}
\end{figure}

All results in Figs.~\ref{fig:params} and~\ref{fig:k_semsitivity} are evaluated on 100 source--target image pairs per target model.
We investigate the sensitivity of IAU-FOA to three hyperparameters: the marginal-relaxation penalty weight $\rho$ in adaptive unbalanced OT, the local-loss weighting factor $\lambda_l$ in the dual-granularity alignment objective, and the number of local clusters $K$. We vary one hyperparameter at a time while keeping all other settings fixed. For $\rho$ and $\lambda_l$, we report the arithmetic means of ASR and AvgSim over two models in each family: Gemini-3.5-Flash and Gemini-3.1-Flash-Lite for Gemini, GPT-5.5 and GPT-5.4 for GPT, and Claude-Sonnet-4.6 and Claude-Opus-4.8 for Claude. For $K$, we report ASR on all six commercial targets in the main evaluation.

As shown in Fig.~\ref{fig:params}(a), we vary $\rho$ over $\{0.1,0.2,0.3,0.4\}$. The setting $\rho=0.2$ yields the highest reported ASR and AvgSim for Gemini, GPT, and Claude, achieving family-mean ASRs of $85.0\%$, $81.0\%$, and $90.5\%$ for Gemini, GPT, and Claude, respectively, with corresponding mean AvgSim values of $65.8\%$, $63.2\%$, and $68.7\%$. These family-mean ASRs agree with the corresponding two-model averages of the full-method results in Table~\ref{tab:ablation}. A smaller value may over-relax the marginal constraints, whereas a larger value moves the transport plan closer to balanced OT and may retain unreliable local correspondences. Fig.~\ref{fig:params}(b) evaluates $\lambda_l\in\{1.0,2.0,3.0,4.0\}$. Setting $\lambda_l=2.0$ also yields the highest reported values of both metrics for Gemini, GPT, and Claude, with the same ASR and AvgSim values reported above. These results favor an intermediate local-loss weight: too little emphasis on local alignment may limit its contribution, whereas too much may reduce the benefit of global semantic alignment. Based on these results, we use $\rho=0.2$ and $\lambda_l=2.0$ in all subsequent experiments.

Figure~\ref{fig:k_semsitivity} further evaluates $K\in\{6,8,10,12,14,16\}$,
which controls the granularity of the clustered patch features used for local
transport alignment. The highest mean ASR is $0.855$ at $K=10$, compared
with $0.822$ and $0.793$ at $K=6$ and $K=8$, respectively. Increasing $K$
beyond 10 yields mean ASRs between $0.828$ and $0.838$, indicating that
finer clustering does not consistently improve aggregate transferability.
The preferred setting nevertheless varies by target: $K=10$ achieves the
highest ASR on GPT-5.5, GPT-5.4, and Cl-S-4.6 and ties for the best result on
G-3.5-F, whereas G-3.1-FL and Cl-O-4.8 peak at $K=14$ with ASRs of $0.910$
and $0.940$, respectively. Based on the best mean performance across the
six targets, we use a fixed $K=10$ throughout optimization unless otherwise
specified.

\begin{table*}[t]
\centering
\footnotesize
\setlength{\tabcolsep}{3.4pt}
\caption{%
  \textbf{Targeted transfer to six open-source MLLMs on the same 1,000 source--target image pairs per model.}
  ASR and AvgSim measure attack success and mean similarity, respectively;
  bold marks column maxima.}
\label{tab:main-open}

\begin{tabular}{c cc cc cc cc cc cc cc}
\toprule

& \multicolumn{12}{c}{Target MLLMs}
& \multicolumn{2}{c}{Average} \\
\cmidrule(lr){2-13}\cmidrule(l){14-15}

& \multicolumn{2}{c}{Qwen2.5-VL-7B}
& \multicolumn{2}{c}{InternVL3-14B}
& \multicolumn{2}{c}{MiniCPM-o-2.6 (8B)}
& \multicolumn{2}{c}{LLaVA-1.6-7B}
& \multicolumn{2}{c}{Qwen2.5-VL-72B}
& \multicolumn{2}{c}{InternVL3-78B}
& \multicolumn{2}{c}{Overall} \\

Method & ASR & AvgSim
& ASR & AvgSim
& ASR & AvgSim
& ASR & AvgSim
& ASR & AvgSim
& ASR & AvgSim
& ASR & AvgSim \\

\midrule

COA~\citep{xie2025chain}
& 0.001 & 0.018
& 0.002 & 0.018
& 0.002 & 0.020
& 0.004 & 0.016
& 0.001 & 0.020
& 0.000 & 0.015
& 0.002 & 0.018 \\

AnyAttack~\citep{zhang2025anyattack}
& 0.011 & 0.039
& 0.011 & 0.032
& 0.014 & 0.036
& 0.023 & 0.036
& 0.005 & 0.042
& 0.008 & 0.028
& 0.012 & 0.036 \\

AdvDiffVLM~\citep{guo2024efficient}
& 0.309 & 0.313
& 0.315 & 0.318
& 0.293 & 0.292
& 0.309 & 0.296
& 0.314 & 0.327
& 0.194 & 0.239
& 0.289 & 0.298 \\

M-Attack~\citep{li2026frustratingly}
& 0.569 & 0.472
& 0.661 & 0.523
& 0.632 & 0.520
& 0.680 & 0.562
& 0.571 & 0.476
& 0.440 & 0.390
& 0.592 & 0.491 \\

FOA~\citep{jia2026adversarial}
& 0.575 & 0.488
& 0.703 & 0.561
& 0.692 & 0.551
& 0.725 & 0.598
& 0.617 & 0.506
& 0.490 & 0.422
& 0.634 & 0.521 \\

M-Attack-v2~\citep{zhao2026pushing}
& 0.705 & 0.564
& 0.809 & 0.631
& 0.814 & 0.634
& 0.812 & 0.666
& 0.693 & 0.563
& 0.648 & 0.535
& 0.747 & 0.599 \\

MPCAttack~\citep{li2026multi}
& 0.697 & 0.559
& 0.853 & 0.659
& 0.836 & 0.649
& 0.839 & 0.686
& 0.714 & 0.577
& 0.772 & 0.606
& 0.785 & 0.623 \\

\midrule

\textbf{IAU-FOA (Ours)}
& \textbf{0.821} & \textbf{0.645}
& \textbf{0.910} & \textbf{0.710}
& \textbf{0.866} & \textbf{0.692}
& \textbf{0.885} & \textbf{0.723}
& \textbf{0.823} & \textbf{0.653}
& \textbf{0.871} & \textbf{0.685}
& \textbf{0.863} & \textbf{0.685} \\

\bottomrule
\end{tabular}
\end{table*}

\begin{table*}[t]
\centering
\footnotesize
\setlength{\tabcolsep}{3.4pt}
\caption{%
  \textbf{Targeted transfer to six commercial MLLMs on the same 1,000 source--target image pairs per model.}
  ASR and AvgSim measure attack success and mean similarity, respectively;
  bold marks column maxima.}
\label{tab:main-closed}

\begin{tabular}{c cc cc cc cc cc cc cc}
\toprule

& \multicolumn{12}{c}{Target MLLMs}
& \multicolumn{2}{c}{Average} \\
\cmidrule(lr){2-13}\cmidrule(l){14-15}

& \multicolumn{2}{c}{Gemini-3.5-Flash}
& \multicolumn{2}{c}{Gemini-3.1-Flash-Lite}
& \multicolumn{2}{c}{GPT-5.5}
& \multicolumn{2}{c}{GPT-5.4}
& \multicolumn{2}{c}{Claude-Sonnet-4.6}
& \multicolumn{2}{c}{Claude-Opus-4.8}
& \multicolumn{2}{c}{Overall} \\

Method & ASR & AvgSim
& ASR & AvgSim
& ASR & AvgSim
& ASR & AvgSim
& ASR & AvgSim
& ASR & AvgSim
& ASR & AvgSim \\

\midrule

COA~\citep{xie2025chain}
& 0.001 & 0.010
& 0.000 & 0.013
& 0.002 & 0.019
& 0.001 & 0.018
& 0.000 & 0.013
& 0.002 & 0.018
& 0.001 & 0.015 \\

AnyAttack~\citep{zhang2025anyattack}
& 0.004 & 0.019
& 0.004 & 0.028
& 0.009 & 0.032
& 0.005 & 0.027
& 0.008 & 0.026
& 0.006 & 0.029
& 0.006 & 0.027 \\

AdvDiffVLM~\citep{guo2024efficient}
& 0.055 & 0.129
& 0.068 & 0.127
& 0.116 & 0.171
& 0.112 & 0.172
& 0.277 & 0.297
& 0.324 & 0.313
& 0.159 & 0.202 \\

M-Attack~\citep{li2026frustratingly}
& 0.309 & 0.321
& 0.312 & 0.333
& 0.285 & 0.292
& 0.219 & 0.255
& 0.425 & 0.384
& 0.535 & 0.452
& 0.348 & 0.339 \\

FOA~\citep{jia2026adversarial}
& 0.340 & 0.350
& 0.342 & 0.353
& 0.350 & 0.332
& 0.296 & 0.296
& 0.474 & 0.417
& 0.581 & 0.484
& 0.397 & 0.372 \\

M-Attack-v2~\citep{zhao2026pushing}
& 0.525 & 0.455
& 0.506 & 0.450
& 0.454 & 0.401
& 0.383 & 0.357
& 0.630 & 0.511
& 0.723 & 0.568
& 0.537 & 0.457 \\

MPCAttack~\citep{li2026multi}
& 0.673 & 0.547
& 0.675 & 0.542
& 0.453 & 0.408
& 0.390 & 0.365
& 0.621 & 0.514
& 0.724 & 0.581
& 0.589 & 0.493 \\

\midrule

\textbf{IAU-FOA (Ours)}
& \textbf{0.776} & \textbf{0.616}
& \textbf{0.789} & \textbf{0.622}
& \textbf{0.696} & \textbf{0.561}
& \textbf{0.672} & \textbf{0.553}
& \textbf{0.744} & \textbf{0.605}
& \textbf{0.822} & \textbf{0.641}
& \textbf{0.750} & \textbf{0.600} \\

\bottomrule
\end{tabular}
\end{table*}

\subsection{Comparative Experimental Results}
\label{sec:comparative_results}
\par \noindent \textbf{Comparisons with different attack methods.}
All results in Tables~\ref{tab:main-open} and~\ref{tab:main-closed} are evaluated on the same 1,000 source--target image pairs per target model.
We compare the proposed IAU-FOA with seven targeted transfer attack baselines, including COA, AnyAttack, AdvDiffVLM, M-Attack, FOA, M-Attack-v2, and MPCAttack, across both open-source and closed-source MLLMs. As shown in Table~\ref{tab:main-open}, IAU-FOA achieves the highest ASR and AvgSim on all six open-source targets across the Qwen2.5-VL, InternVL3, MiniCPM-o, and LLaVA model families. Its reported average ASR and AvgSim are $86.3\%$ and $68.5\%$, respectively, compared with $78.5\%$ and $62.3\%$ for MPCAttack, the strongest baseline on average. Compared with the best-performing baseline for each target and metric, IAU-FOA improves ASR by $3.0$--$11.6$ percentage points and AvgSim by $3.7$--$8.1$ percentage points. The largest gains in both metrics occur on Qwen2.5-VL-7B, where IAU-FOA achieves an ASR of $82.1\%$ and an AvgSim of $64.5\%$, compared with $70.5\%$ and $56.4\%$ for M-Attack-v2.

As reported in Table~\ref{tab:main-closed}, IAU-FOA also ranks first in both metrics on all six commercial targets from the Gemini, GPT, and Claude families. Its reported average ASR and AvgSim are $75.0\%$ and $60.0\%$, respectively, compared with $58.9\%$ and $49.3\%$ for MPCAttack. Relative to the strongest competing method for each target and metric, the improvements range from $9.8$ to $28.2$ percentage points in ASR and from $6.0$ to $18.8$ percentage points in AvgSim. The largest gains occur on GPT-5.4, where IAU-FOA improves ASR from $39.0\%$ to $67.2\%$ and AvgSim from $36.5\%$ to $55.3\%$. These results demonstrate that the proposed method outperforms previous methods in both attack success rate and average target-semantic alignment.
\par \noindent \textbf{Evaluation on recent commercial MLLMs.}
In August 2026, we further evaluate IAU-FOA on five recent commercial MLLMs: OpenAI's GPT-5.6-Luna, GPT-5.6-Sol, and GPT-5.6-Terra, Anthropic's Claude-Opus-5, and Google's Gemini-3.7-Flash. These targets span three providers and multiple capability and efficiency tiers. We use the same 1,000 source--target image pairs per target model as the main comparison and compare against M-Attack-v2 and MPCAttack, its two strongest baselines. As shown in Table~\ref{tab:new-targets}, IAU-FOA achieves the highest ASR and AvgSim on all five targets, averaging $66.7\%$ and $54.2\%$, respectively. Compared with MPCAttack, the strongest baseline on average, these results represent gains of $21.5$ percentage points in ASR and $13.4$ points in AvgSim. Across individual targets, improvements over the stronger baseline range from $7.3$ to $30.7$ percentage points in ASR and from $4.0$ to $19.4$ points in AvgSim. The largest ASR gain occurs on GPT-5.6-Luna, increasing from $25.0\%$ to $55.7\%$. On GPT-5.6-Sol and GPT-5.6-Terra, ASR improves by $29.1$ and $29.7$ percentage points over the respective strongest baselines, while the gains on Claude-Opus-5 and Gemini-3.7-Flash are $7.3$ and $9.9$ points. These results demonstrate improved transferability across all five targets, with the largest gains on the GPT-5.6 family.


\begin{table}[t]
\centering
\footnotesize
\setlength{\tabcolsep}{2.2pt}
\renewcommand{\arraystretch}{1.12}
\caption{%
  \textbf{Targeted transfer to five recent commercial MLLMs (August 2026).}
  ASR and AvgSim are evaluated on 1,000 source--target image pairs per model;
  bold marks row maxima.}
\label{tab:new-targets}

\begin{tabular}{@{}ccccc@{}}
\toprule
Target MLLM & Metric
& \makecell{M-Attack-v2\\[-1pt]\citep{zhao2026pushing}}
& \makecell{MPCAttack\\[-1pt]\citep{li2026multi}}
& \makecell{\textbf{IAU-FOA}\\[-1pt]\textbf{(Ours)}} \\
\midrule

\multirow{2}{*}{GPT-5.6-Luna}
& ASR & 0.245 & 0.250 & \textbf{0.557} \\
& AvgSim & 0.275 & 0.277 & \textbf{0.471} \\
\addlinespace[1.5pt]

\multirow{2}{*}{GPT-5.6-Sol}
& ASR & 0.323 & 0.315 & \textbf{0.614} \\
& AvgSim & 0.326 & 0.321 & \textbf{0.509} \\
\addlinespace[1.5pt]

\multirow{2}{*}{GPT-5.6-Terra}
& ASR & 0.296 & 0.299 & \textbf{0.596} \\
& AvgSim & 0.306 & 0.307 & \textbf{0.492} \\
\addlinespace[1.5pt]

\multirow{2}{*}{Claude-Opus-5}
& ASR & 0.669 & 0.683 & \textbf{0.756} \\
& AvgSim & 0.541 & 0.552 & \textbf{0.592} \\
\addlinespace[1.5pt]

\multirow{2}{*}{Gemini-3.7-Flash}
& ASR & 0.553 & 0.713 & \textbf{0.812} \\
& AvgSim & 0.480 & 0.582 & \textbf{0.647} \\

\midrule

\multirow{2}{*}{\textbf{Average}}
& ASR & 0.417 & 0.452 & \textbf{0.667} \\
& AvgSim & 0.385 & 0.408 & \textbf{0.542} \\

\bottomrule
\end{tabular}
\end{table}

\subsection{Evaluation under Adversarial Defenses}
\label{sec:performance_analysis}

\begin{table}[t]
\centering
\footnotesize
\setlength{\tabcolsep}{4pt}
\renewcommand{\arraystretch}{1.08}
\caption{\textbf{Robustness against input defenses: mean ASR across six commercial MLLMs.}
All settings use 100 source--target image pairs per model. Def. Avg. averages the six defended settings; bold marks row maxima.}
\label{tab:defense}
\begin{tabular}{@{}lccc@{}}
\toprule
\textbf{Defense} & \textbf{M-Attack-v2}~\citep{zhao2026pushing} & \textbf{MPCAttack}~\citep{li2026multi}
& \textbf{IAU-FOA(Ours)} \\
\midrule
No defense & 0.638 & 0.690 & \textbf{0.855} \\
\midrule
JPEG$_{75}$~\citep{dziugaite2016study} & 0.557 & 0.640 & \textbf{0.803} \\
BDR$_{4\text{-bit}}$~\citep{xu2017feature} & 0.508 & 0.605 & \textbf{0.778} \\
Median$_{3\times3}$~\citep{xu2017feature} & 0.343 & 0.382 & \textbf{0.605} \\
TVM~\citep{guo2017countering} & 0.495 & 0.567 & \textbf{0.737} \\
ComDefend~\citep{jia2019comdefend} & 0.370 & 0.393 & \textbf{0.618} \\
DISCO~\citep{ho2022disco} & 0.297 & 0.382 & \textbf{0.587} \\
\midrule
\textbf{Def. Avg.} & 0.428 & 0.495 & \textbf{0.688} \\
\bottomrule
\end{tabular}
\end{table}

We conduct comparative experiments under six input-transformation and purification
defenses: JPEG compression~\citep{dziugaite2016study}, total-variation minimization (TVM)~\citep{guo2017countering}, bit-depth
reduction (BDR)~\citep{xu2017feature}, median filtering~\citep{xu2017feature}, ComDefend~\citep{jia2019comdefend}, and DISCO~\citep{ho2022disco}. JPEG compression
uses a quality factor of 75, BDR reduces the color depth to 4 bits, and median
filtering uses a $3\times3$ kernel. All settings, including the undefended setting,
are evaluated on 100 source--target image pairs per target model. Table~\ref{tab:defense} reports the mean
ASR across the six commercial MLLMs for each defense; Def. Avg. averages the
six defended settings, excluding the undefended setting. IAU-FOA achieves the highest ASR in every defended setting,
with an average of $0.688$, compared with $0.495$ for MPCAttack and $0.428$
for M-Attack-v2. These results correspond to average improvements of $19.3$
and $26.0$ percentage points, respectively. The gains over MPCAttack range
from $16.3$ to $22.5$ percentage points across individual defenses.
Under DISCO, IAU-FOA achieves an ASR of
$0.587$, exceeding MPCAttack ($0.382$) and M-Attack-v2 ($0.297$).
These results show that the advantage of the proposed method persists after both conventional
preprocessing and learned purification.

\subsection{Ablation Study}
\label{sec:ablation_study}
To evaluate the contribution of each component in IAU-FOA, we compare the full method with four ablated variants: w/o OT removes the local alignment branch and retains only global alignment; w/o Adaptive UOT replaces adaptive unbalanced transport with balanced OT; w/o VIA disables visual-invariance augmentation; and w/o Dynamic ensemble fixes all surrogate weights to one. All variants use the same 100 source--target image pairs per target model against six commercial MLLMs, with all remaining attack and evaluation settings unchanged. Table~\ref{tab:ablation} reports per-model ASR and the mean ASR and AvgSim across the six targets.


\begin{table*}[t]
\centering
\small
\setlength{\tabcolsep}{5.5pt}
\caption{%
    \textbf{Component ablation on six commercial MLLMs using the same 100 source--target image pairs per model.}
    G/Cl denote Gemini/Claude; F/FL/S/O denote Flash/Flash-Lite/Sonnet/Opus.
    w/o OT removes local alignment; w/o Adaptive UOT uses BOT;
    w/o Dynamic ensemble uses Equal weighting.}
\label{tab:ablation}

\begin{tabular}{l cccccc cc}
\toprule
& \multicolumn{6}{c}{Attack success rate per target MLLM}
& \multicolumn{2}{c}{Overall} \\
\cmidrule(lr){2-7}\cmidrule(l){8-9}

Variant
& G-3.5-F
& G-3.1-FL
& GPT-5.5
& GPT-5.4
& Cl-S-4.6
& Cl-O-4.8
& ASR
& AvgSim \\

\midrule

\textbf{IAU-FOA (Ours)}
& 0.850
& 0.850
& 0.810
& 0.810
& 0.890
& 0.920
& \textbf{0.855}
& \textbf{0.659} \\

\midrule

\quad w/o OT
& 0.680
& 0.730
& 0.650
& 0.550
& 0.760
& 0.780
& 0.692
& 0.543 \\

\quad w/o Adaptive UOT
& 0.840
& 0.800
& 0.780
& 0.750
& 0.850
& 0.850
& 0.812
& 0.635 \\

\quad w/o VIA
& 0.770
& 0.750
& 0.610
& 0.500
& 0.630
& 0.790
& 0.675
& 0.551 \\

\quad w/o Dynamic ensemble
& 0.850
& 0.800
& 0.770
& 0.730
& 0.830
& 0.820
& 0.800
& 0.646 \\

\bottomrule
\end{tabular}
\end{table*}

The full method achieves the highest mean ASR and AvgSim of $0.855$ and $0.659$, respectively. Removing local OT alignment reduces these metrics by $16.3$ and $11.6$ percentage points, while replacing Adaptive UOT with balanced OT reduces them by $4.3$ and $2.4$ points, supporting both local matching and confidence-adaptive transport. Removing VIA causes the largest ASR decrease of $18.0$ points, together with a $10.8$-point decrease in AvgSim. Equal ensemble weighting reduces mean ASR and AvgSim by $5.5$ and $1.3$ points, respectively; ASR decreases on five targets and remains unchanged on G-3.5-F. Overall, all four ablations reduce average performance, with VIA contributing most to ASR and local OT alignment contributing most to AvgSim under the evaluated settings.

\begin{figure}[t]
\begin{center}
 \includegraphics[width=1.0\linewidth]{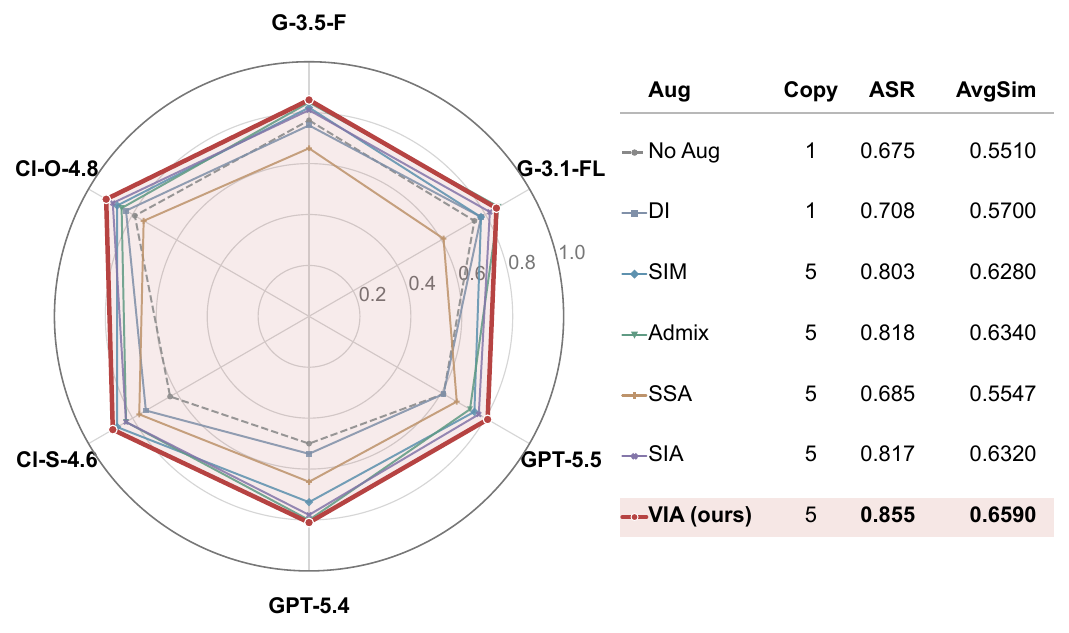}
\end{center}
\caption{Input augmentation strategies on six commercial MLLMs, evaluated on 100 source--target image pairs per model. The radar chart shows ASR on Gemini-3.5-Flash (G-3.5-F), Gemini-3.1-Flash-Lite (G-3.1-FL), GPT-5.5, GPT-5.4, Claude-Sonnet-4.6 (Cl-S-4.6), and Claude-Opus-4.8 (Cl-O-4.8). The table reports input copies (Copy) and mean ASR and AvgSim across targets.}
        \label{fig:aug_radar}
\end{figure}

\subsection{Analysis of the Proposed Method}
\label{sec:component_analysis}
We conduct all comparative experiments in this subsection on 100 source--target image pairs per model.
\par \noindent \textbf{Visual-Invariance Augmentation.}
As shown in Fig.~\ref{fig:aug_radar}, we compare the proposed visual-invariance augmentation (VIA) with No Aug and five input-diversification strategies (DI~\citep{xie2019improving}, SIM~\citep{lin2019nesterov}, Admix~\citep{wang2021admix}, SSA~\citep{long2022frequency}, and SIA~\citep{wang2023structure}) on six commercial MLLMs, while keeping all other attack settings fixed. VIA achieves the highest ASR on all six targets and the best aggregate performance, with an average ASR of $0.855$ and an AvgSim of $0.6590$. Compared with No Aug ($0.675$ ASR and $0.5510$ AvgSim), it improves the two metrics by $18.0$ and $10.80$ percentage points, respectively. Admix is the strongest competing augmentation on average, with an ASR of $0.818$ and an AvgSim of $0.6340$; VIA exceeds these scores by $3.7$ and $2.50$ percentage points. SIM, Admix, SSA, and SIA all use five transformed copies, matching VIA, yet obtain lower average ASR and AvgSim. This comparison indicates that the gains depend on the augmentation strategy, beyond the number of augmented views. VIA varies intensity and channel-wise gains while preserving spatial content, encouraging the perturbation to remain effective under exposure, illumination-color, and white-balance variations. The improvements across the Gemini, GPT, and Claude families support the effectiveness of these photometric transformations for targeted transferability.

\begin{figure}[t]
\begin{center}
 \includegraphics[width=0.9\linewidth]{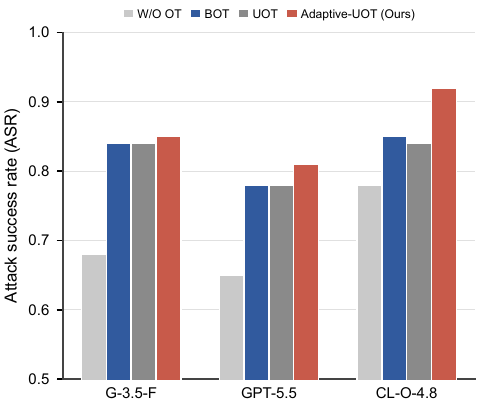}
\end{center}
\vspace{-6mm}
\caption{OT-based local alignment on three commercial MLLMs, evaluated on 100 source--target image pairs per model. W/O OT removes the local alignment branch; BOT and UOT use balanced and plain unbalanced OT, respectively. Adaptive-UOT is our confidence-aware formulation and achieves the highest ASR on all three targets.}
        \label{fig:ot_asr_comparison}
\end{figure}

\par \noindent \textbf{Adaptive Unbalanced Optimal Transport.}
As shown in Fig.~\ref{fig:ot_asr_comparison}, we compare Adaptive-UOT with
W/O OT, BOT, and UOT while keeping all other attack settings fixed.
W/O OT removes the local alignment branch, and BOT imposes balanced transport
constraints; these correspond to w/o OT and w/o Adaptive UOT in
Table~\ref{tab:ablation}. UOT relaxes the marginal constraints without
confidence-aware adaptation. Adaptive-UOT achieves the highest ASR on all
three targets, reaching $0.85$, $0.81$, and $0.92$ on G-3.5-F, GPT-5.5,
and Cl-O-4.8, respectively. Its mean ASR is $0.860$, compared with $0.703$,
$0.823$, and $0.820$ for W/O OT, BOT, and UOT, corresponding to gains of
$15.7$, $3.7$, and $4.0$ percentage points. All transport formulations
outperform W/O OT on every target, supporting OT-based local alignment.
Plain UOT matches BOT on G-3.5-F and GPT-5.5 but performs slightly worse
on Cl-O-4.8 ($0.84$ vs.\ $0.85$). Adaptive-UOT's largest gains occur on
Cl-O-4.8: $7.0$ percentage points over BOT and $8.0$ over UOT.
These results support adapting marginal regularization to matching confidence;
uniform relaxation alone yields no improvement over BOT here.
These experimental results demonstrate the effectiveness of the proposed
Adaptive Unbalanced Optimal Transport in improving targeted transferability.

\begin{table}[t]
\centering
\footnotesize
\setlength{\tabcolsep}{1.7pt}
\caption{%
  \textbf{ASR comparison of ensemble weighting strategies.}
  Results use 100 source--target image pairs per model; only the weighting rule varies.
  G/Cl denote Gemini/Claude; F/FL/S/O denote Flash/Flash-Lite/Sonnet/Opus.
  Bold marks column maxima.}
\label{tab:ensemble-weighting}

\begin{tabular*}{\linewidth}{@{\extracolsep{\fill}}c cccccc c@{}}
\toprule
Weighting
& \makecell{G-3.5-\\F}
& \makecell{G-3.1-\\FL}
& \makecell{GPT-\\5.5}
& \makecell{GPT-\\5.4}
& \makecell{Cl-S-\\4.6}
& \makecell{Cl-O-\\4.8}
& ASR \\

\midrule

Equal
& \textbf{0.850}
& 0.800
& 0.770
& 0.730
& 0.830
& 0.820
& 0.800 \\

Random
& 0.810
& 0.800
& 0.760
& 0.770
& \textbf{0.890}
& 0.870
& 0.817 \\

Score-Prop.
& \textbf{0.850}
& 0.830
& 0.800
& 0.740
& 0.860
& 0.910
& 0.832 \\

\textbf{Dynamic (Ours)}
& \textbf{0.850}
& \textbf{0.850}
& \textbf{0.810}
& \textbf{0.810}
& \textbf{0.890}
& \textbf{0.920}
& \textbf{0.855} \\

\bottomrule
\end{tabular*}
\end{table}

\par \noindent \textbf{Dynamic Ensemble Model Weighting.}
As shown in Table~\ref{tab:ensemble-weighting}, we compare dynamic ensemble weighting with three alternatives while keeping all other attack settings fixed. Using the score notation in Sec.~\ref{sec:dynamic_weighting}, Equal Weighting sets every weight to one, and Random Weighting samples symmetric Dirichlet weights scaled to sum to $M$. Score-Proportional Weighting assigns $\omega_m^{(k)}\propto\max(A_m^{(k)},0)$ with the same normalization, reverting to equal weights when the sum of nonnegative scores is at most $10^{-8}$. Thus, this control weights encoders by their current nonnegative alignment scores. Dynamic weighting instead uses the current-to-history score ratio in Eqs.~\eqref{eq:progress_ratio} and~\eqref{eq:dynamic_weight}.
Dynamic weighting achieves the highest mean ASR of $0.855$ across the six
target models, compared with $0.800$, $0.817$, and $0.832$ for Equal, Random,
and Score-Proportional Weighting, respectively. These results correspond to
gains of $5.5$, $3.8$, and $2.3$ percentage points.
It ranks first on all six targets, with outright best results on G-3.1-FL,
GPT-5.5, GPT-5.4, and Cl-O-4.8, and ties on G-3.5-F and Cl-S-4.6.
The largest gain over Score-Proportional Weighting occurs on GPT-5.4,
where ASR increases from $0.740$ to $0.810$.
These experimental results demonstrate the effectiveness of the proposed
Dynamic Ensemble Model Weighting in improving targeted transferability. We further analyze visual fidelity and conduct experiments to examine the effect
of the number of attack iterations, with results provided in the Appendix
(Figs.~\ref{fig:visualization} and~\ref{fig:iteration}). We also examine the association
between source--target CLIP similarity and attack success, with results
reported in the Appendix (Table~\ref{tab:clip_similarity}).

\section{Conclusion}
In this paper, we investigate targeted adversarial transferability across
heterogeneous MLLMs and propose IAU-FOA to improve black-box attack performance
through global and local feature alignment. Specifically, global alignment
provides semantic guidance, while optimal transport matches clustered patch
features to exploit fine-grained visual structures. To address unreliable local
correspondences under balanced transport constraints, we propose adaptive
unbalanced optimal transport, which adjusts marginal regularization according
to matching confidence. We also provide a conditional theoretical analysis of
how local transport alignment can improve targeted transfer. Furthermore, we
develop a dynamic ensemble weighting strategy that adjusts each surrogate
encoder's contribution according to relative changes in its alignment
similarity score. Finally, we propose a visual-invariance augmentation strategy
that combines bidirectional intensity scaling and channel-wise white-balance
adjustment to improve transferability under varying photometric conditions.
Extensive experiments on open-source and closed-source MLLMs demonstrate that
IAU-FOA outperforms the compared transferable attack methods. Ablation studies
and component analyses further demonstrate the effectiveness of the proposed
designs.

\ifCLASSOPTIONcaptionsoff
  \newpage
\fi
\bibliographystyle{plain}
\bibliography{refer}

\onecolumn
\setcounter{section}{0}
\setcounter{subsection}{0}
\setcounter{equation}{0}
\setcounter{figure}{0}
\setcounter{table}{0}
\setcounter{algorithm}{0}
\renewcommand{\thesection}{S\arabic{section}}
\renewcommand{\theequation}{S\arabic{equation}}
\renewcommand{\thefigure}{S\arabic{figure}}
\renewcommand{\thetable}{S\arabic{table}}
\renewcommand{\thealgorithm}{S\arabic{algorithm}}
\renewcommand{\theHsection}{S\arabic{section}}
\renewcommand{\theHequation}{S\arabic{equation}}
\renewcommand{\theHfigure}{S\arabic{figure}}
\renewcommand{\theHtable}{S\arabic{table}}
\providecommand{\theHalgorithm}{}
\renewcommand{\theHalgorithm}{S\arabic{algorithm}}
\title{Visual-Invariance-Augmented Feature Optimal Alignment for Transferable Adversarial Attacks against Closed-Source MLLMs\\Appendix}
\author{Xiaojun Jia, Simeng Qin, Yiming Li, Jie Liao, Sensen Gao,\\
Ke Ma, Yang Liu, and Xiaochun Cao}
\markboth{Appendix}{Visual-Invariance-Augmented Feature Optimal Alignment}
\makeatletter
\let\maketitle\supplementmaketitle
\let\@maketitle\supplementinternalmaketitle
\makeatother
\maketitle
\section{Proofs and Details of the Transferability Analysis}
\label{sec:appendix_theory}
This section proves Theorem~\ref{thm:ot_transfer} and discusses projected sign updates. Throughout the local comparison, the encoder, augmentation view, target features, cluster assignments, and transport weights are held fixed. In particular, the frozen-plan local loss satisfies
\begin{equation}
    -\nabla_{\boldsymbol\delta}\mathcal L_l(\boldsymbol\delta;\mathbf Q)
    =-\nabla_{\boldsymbol\delta}\sum_{i,j}Q_{ij}(1-S_{ij})
    =\sum_{i,j}Q_{ij}\mathbf g_{ij}=\mathbf G(\mathbf Q).
    \label{eq:ot_loss_gradient_link}
\end{equation}

{
\subsection*{Proof of Theorem~\ref{thm:ot_transfer}}
\begin{proof}
Let $C(\mathbf Q)=\sum_{i,j}Q_{ij}c_{ij}$, $a_i=b_j=1/K$, and $Q^{\dagger}_{ij}=K^{-1}\mathbf 1\{j=\pi(i)\}$. The feasible plan $\mathbf Q^{\dagger}$ and Assumption~\ref{ass:1} give
\begin{equation}
    C(\mathbf Q_{ot})-C(\mathbf Q^{\dagger})
    =\sum_{i,j}Q_{ot,ij}(c_{ij}-c_{i\pi(i)})
    \geq\Delta q(\mathbf Q_{ot}).
    \label{eq:ot_cost_mass_proof}
\end{equation}
Optimality of $\mathbf Q_{ot}$ gives
\begin{equation}
    C(\mathbf Q_{ot})-C(\mathbf Q^{\dagger})
    \leq\varepsilon_{ot}\big[H(\mathbf Q_{ot})-H(\mathbf Q^{\dagger})\big]
    \leq\varepsilon_{ot}\log K,
    \label{eq:ot_entropy_proof}
\end{equation}
where $H(\mathbf Q^{\dagger})=\log K$ and $H(\mathbf Q_{ot})\leq2\log K$. Hence Eq.~\textup{(\ref{eq:ot_mismatch_bound})} holds.

Let $E(\mathbf Q)=\sum_{i,j}Q_{ij}e_{ij}$, so $|E(\mathbf Q)|\leq\xi$ for unit-mass plans. Assumption~\ref{ass:2} yields
\begin{equation}
    \langle\mathbf g_v,\mathbf G(\mathbf Q)\rangle
    =A\big[1-q(\mathbf Q)\big]-Bq(\mathbf Q)+E(\mathbf Q)
    =A-(A+B)q(\mathbf Q)+E(\mathbf Q).
    \label{eq:ot_gradient_decomposition}
\end{equation}
Subtracting the two plan contributions gives
\begin{equation}
\begin{aligned}
    \langle\mathbf g_v,\mathbf G(\mathbf Q_{ot})-\mathbf G(\mathbf Q_0)\rangle
    &=(A+B)\big[q_0-q(\mathbf Q_{ot})\big]+E(\mathbf Q_{ot})-E(\mathbf Q_0)\\
    &\geq(A+B)(q_0-\bar q)-2\xi=\kappa.
\end{aligned}
    \label{eq:ot_mass_gradient_identity}
\end{equation}
The condition in Eq.~\textup{(\ref{eq:ot_joint_transfer_condition})} gives $\bar q<q_0-2\xi/(A+B)$, hence $\kappa>0$.

Since $\mathbf h$ cancels, $\langle\mathbf g_v,\mathbf d_{ot}-\mathbf d_0\rangle\geq\lambda_l\kappa$. For $r\in\{ot,0\}$, $L$-smoothness gives
\begin{equation}
    \left|\mathcal J_v(\boldsymbol\delta+\alpha\mathbf d_r)
    -\mathcal J_v(\boldsymbol\delta)-\alpha\langle\mathbf g_v,\mathbf d_r\rangle\right|
    \leq\frac{L\alpha^2}{2}\|\mathbf d_r\|_2^2.
    \label{eq:ot_taylor_bound}
\end{equation}
Subtracting the uniform-step upper bound from the OT-step lower bound yields
\begin{equation}
\begin{aligned}
    \mathcal J_v(\boldsymbol\delta+\alpha\mathbf d_{ot})
    -\mathcal J_v(\boldsymbol\delta+\alpha\mathbf d_0)
    &\geq\alpha\lambda_l\kappa
    -\frac{L\alpha^2}{2}\big(\|\mathbf d_{ot}\|_2^2+\|\mathbf d_0\|_2^2\big)\\
    &\geq\alpha\lambda_l\kappa-L\alpha^2D^2=\tau.
\end{aligned}
    \label{eq:ot_finite_step_proof}
\end{equation}
The step-size condition ensures $\tau>0$ for $L>0$; for $L=0$, $\tau=\alpha\lambda_l\kappa>0$.
\end{proof}
}

\subsection*{Finite Solver Accuracy}
For a balanced numerical plan with exact prescribed marginals, suppose its regularized objective is at most that of $\mathbf Q^{\dagger}$ plus $\zeta\geq0$. The same argument gives
\begin{equation}
    q(\mathbf Q)\leq\min\!\left\{1,
       \frac{\varepsilon_{ot}\log K+\zeta}{\Delta}\right\}.
    \label{eq:ot_inexact_bound}
\end{equation}
For this inexact plan, the corresponding sufficient condition becomes $(\varepsilon_{ot}\log K+\zeta)/\Delta+2\xi/(A+B)<q_0$. Thus, the optimization error must also fit within the available comparison margin. The scaling-vector stopping threshold in Section~\ref{sec:adaptive_uot} is not itself a bound on $\zeta$.

\subsection*{Relation to the Implemented Update}
For a raw plan $\mathbf P$ with total mass $s=\sum_{i,j}P_{ij}>0$, let $\mathbf Q=\mathbf P/s$. The implementation uses $\widetilde{\mathbf P}=\mathbf P/(s+\eta)=\theta\mathbf Q$, where $\theta=s/(s+\eta)>0$. Thus, with the local-loss weight $\lambda_l$, its local ascent contribution is $\lambda_l\theta\mathbf G(\mathbf Q)$, corresponding to an effective local weight $\lambda_l\theta$ for a fixed plan. If two compared plans have different factors $\theta_1$ and $\theta_0$, this difference must be retained. For $\|\mathbf g_{ij}\|_2\leq G_{\max}$, and a unit-mass directional advantage bounded below by $\kappa$, one has
\begin{equation}
    \langle\mathbf g_v,\theta_1\mathbf G(\mathbf Q_1)-\theta_0\mathbf G(\mathbf Q_0)\rangle
    \geq\theta_1\kappa-|\theta_1-\theta_0|\,\|\mathbf g_v\|_2G_{\max}.
    \label{eq:ot_normalization_correction}
\end{equation}
This follows by adding and subtracting $\theta_1\mathbf G(\mathbf Q_0)$. For directions $\mathbf d_r=\mathbf h+\lambda_l\theta_r\mathbf G(\mathbf Q_r)$, the finite-step comparison uses $\lambda_l$ times this corrected first-order bound and bounds on the resulting directions. Holding the remaining branch contributions and ensemble weights fixed isolates the effect of the local matching rule. The same decomposition can be summed over encoders and views with fixed nonnegative weights; it does not compare complete trajectories with changing weights or cluster assignments.

Algorithm~\ref{alg:iaufoa} uses projected sign updates. Let $\mathcal D$ denote the feasible perturbation box, including the image-value constraints, and define the actual displacement
\begin{equation}
    \mathbf s_r=\Pi_{\mathcal D}\!\left(\boldsymbol{\delta}
      +\alpha\operatorname{sign}(\mathbf d_r)\right)-\boldsymbol{\delta}.
\end{equation}
If $\mathcal J_v$ is $L$-smooth along these displacement segments, Taylor's inequality gives
\begin{equation}
    \mathcal J_v(\boldsymbol{\delta}+\mathbf s_1)
    -\mathcal J_v(\boldsymbol{\delta}+\mathbf s_0)
    \geq\langle\mathbf g_v,\mathbf s_1-\mathbf s_0\rangle
    -\frac L2\big(\|\mathbf s_1\|_2^2+\|\mathbf s_0\|_2^2\big).
    \label{eq:ot_projected_comparison}
\end{equation}
To connect this displacement comparison to the preceding direction bound, let $\nu$ be a lower bound on $\langle\mathbf g_v,\mathbf d_1-\mathbf d_0\rangle$. For the ordinary directions of Theorem~\ref{thm:ot_transfer}, $\nu=\lambda_l\kappa$; with the implemented normalization, use $\nu=\lambda_l[\theta_1\kappa-|\theta_1-\theta_0|\,\|\mathbf g_v\|_2G_{\max}]$ from Eq.~\eqref{eq:ot_normalization_correction} instead. Define $\mathbf b_r:=\mathbf s_r-\alpha\mathbf d_r$, the change in displacement caused by taking signs and projecting. Then
\begin{equation}
\begin{aligned}
    \langle\mathbf g_v,\mathbf s_1-\mathbf s_0\rangle
    &=\alpha\langle\mathbf g_v,\mathbf d_1-\mathbf d_0\rangle
      +\langle\mathbf g_v,\mathbf b_1-\mathbf b_0\rangle\\
    &\geq\alpha\nu-\|\mathbf g_v\|_2\|\mathbf b_1-\mathbf b_0\|_2.
\end{aligned}
    \label{eq:ot_displacement_direction_link}
\end{equation}
Combining Eqs.~\eqref{eq:ot_projected_comparison} and~\eqref{eq:ot_displacement_direction_link} therefore gives
\begin{equation}
\begin{aligned}
    &\mathcal J_v(\boldsymbol\delta+\mathbf s_1)
    -\mathcal J_v(\boldsymbol\delta+\mathbf s_0)\\
    &\quad\geq\alpha\nu
    -\|\mathbf g_v\|_2\|\mathbf b_1-\mathbf b_0\|_2
    -\frac L2\bigl(\|\mathbf s_1\|_2^2+\|\mathbf s_0\|_2^2\bigr).
\end{aligned}
    \label{eq:ot_projected_transfer_bound}
\end{equation}
Thus, a positive raw-direction margin transfers to a strict advantage for the implemented update whenever the displacement-change term and the curvature term together are smaller than $\alpha\nu$. This is an additional sufficient condition, not a consequence of the transport mass bound. In particular, making $\alpha$ small does not automatically remove the effect of taking signs, since that displacement change can also be of order $\alpha$. The equations identify the extra quantities needed to pass from ordinary gradients to projected sign steps, without assuming that the ordering is preserved. Theorem~\ref{thm:ot_transfer} explains a local mechanism under explicit assumptions; it does not prove unconditional improvement of the full attack or discrete ASR. Whether these assumptions hold for a particular closed-source victim is not established by the proof. The reported experiments evaluate the complete method.

\subsection*{A Constructive Example of Possible Improvement}
The sufficient conditions are nonempty, and improvement is possible even with sign updates. Consider a scalar perturbation $t\in[-0.01,0.01]$, two surrogate clusters, and target features $\mathbf e_1,\mathbf e_2\in\mathbb R^3$. Define unit surrogate features
\begin{equation}
\begin{aligned}
    \mathbf z_1(t)&=(a(t),b(t),\sqrt{1-a(t)^2-b(t)^2}),\\
    \mathbf z_2(t)&=(b(t),a(t),\sqrt{1-a(t)^2-b(t)^2}),
    \qquad a(t)=0.9+t,\quad b(t)=0.1-2t.
\end{aligned}
\end{equation}
These features are smooth throughout the feasible interval. Their cosine similarities to the targets are $S_{11}=S_{22}=a(t)$ and $S_{12}=S_{21}=b(t)$. Let the victim feature be $\mathbf z_v(t)=(t,\sqrt{1-t^2})$ with target $(1,0)$, giving the distinct victim alignment objective $\mathcal J_v(t)=t$. At $t=0$, the identity correspondence has cost gap $\Delta=0.8$; its pairwise derivatives are $1$, whereas the off-diagonal derivatives are $-2$. Thus, Eq.~\textup{(\ref{eq:ot_gradient_separation})} holds with $A=1$, $B=2$, and $\xi=0$.

For balanced entropic OT with $\varepsilon_{ot}=0.1$, symmetry and the uniform marginals give the exact off-diagonal mass
\begin{equation}
    q(\mathbf Q_{ot})=\frac{1}{1+\exp(8)},\qquad
    G(\mathbf Q_{ot})=1-3q(\mathbf Q_{ot})>0,
    \qquad G(\mathbf Q_0)=-\frac12.
\end{equation}
With no other branch, the same positive local weight, and step size $\alpha=0.01$, the projected sign steps from $t=0$ yield $t_{ot}=0.01$ and $t_0=-0.01$. Hence $\mathcal J_v(t_{ot})>\mathcal J_v(0)>\mathcal J_v(t_0)$. This explicit cosine-feature construction establishes that OT can improve a victim objective relative to uniform matching, including under a sign update. It is an existence example, not a model of the evaluated MLLMs or evidence that their gradients satisfy the assumptions.

\clearpage

\section{Visualization of Adversarial Examples}
\label{sec:appendix_visualization}
Figure~\ref{fig:visualization} compares the adversarial images and perturbations
generated by COA, AdvDiffVLM, AnyAttack, M-Attack-v2, MPCAttack, and IAU-FOA
on three source--target pairs. The left panel shows the clean sources and
adversarial images, while the right panel displays the targets and corresponding
perturbations. IAU-FOA preserves the main structures and colors of the source
images and achieves the highest PSNR and SSIM in all three displayed examples.
From top to bottom, its PSNR values are $29.8$, $30.3$, and $32.5$ dB, with
corresponding SSIM values of $0.808$, $0.770$, and $0.729$.
The average PSNR and SSIM are $30.9$ dB and $0.769$, respectively, compared
with $30.1$ dB and $0.702$ for COA, the strongest baseline on both average
metrics. These comparisons illustrate the visual fidelity of IAU-FOA on the
displayed examples; aggregate image quality over the full evaluation set is
not assessed here.

\begin{figure}[!htbp]
\begin{center}
 \includegraphics[width=1\linewidth]{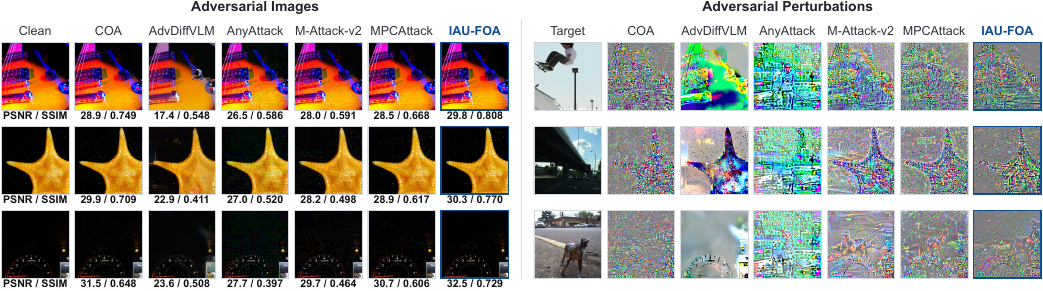}
\end{center}
\caption{\textbf{Adversarial examples and perturbations for three source--target pairs.}
Each row shows one source--target pair. Left: clean source images and adversarial
images generated by COA, AdvDiffVLM, AnyAttack, M-Attack-v2, MPCAttack, and IAU-FOA.
Right: target images and the corresponding perturbation visualizations.
PSNR (dB) and SSIM below each adversarial image are measured against its clean
source; higher values indicate better visual fidelity.}
\label{fig:visualization}
\end{figure}

\clearpage
\section{Effect of the Number of Attack Iterations}
\label{sec:appendix_iterations}
All results in Fig.~\ref{fig:iteration} are evaluated on 100 source--target image pairs per target model.
Figure~\ref{fig:iteration} evaluates the effect of the optimization budget by
varying the number of attack iterations over $\{100,200,300,400,500\}$
while keeping all other settings fixed. Increasing the budget from 100 to 300 iterations improves ASR
on all six commercial MLLMs and raises the mean ASR from $69.5\%$ to $85.5\%$,
a gain of $16.0$ percentage points. Among the tested budgets, 300 iterations
achieve the highest mean ASR. Increasing the budget to 400 iterations lowers
ASR on every target and reduces the mean to $82.5\%$. At 500 iterations, the
mean ASR is $84.5\%$, still $1.0$ percentage point below the result at 300
iterations. Relative to 300 iterations, 500 iterations improve ASR on G-3.5-F
and GPT-5.4, leave it unchanged on Cl-S-4.6, and reduce it on the other three
targets. Thus, additional iterations beyond 300 do not consistently improve
targeted transferability. Considering the highest observed mean ASR and the
lower computational cost relative to larger budgets, we retain 300 iterations
as the default setting unless otherwise specified.

\begin{figure}[!htbp]
\centering
\begin{minipage}{\dimexpr(\textwidth-\columnsep)/2\relax}
\centering
\includegraphics[width=\linewidth]{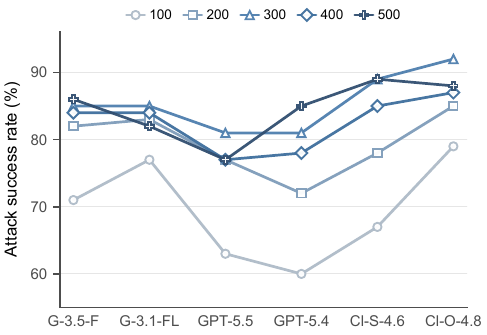}
\caption{\textbf{Effect of attack iterations on targeted transferability.}
ASR on six commercial MLLMs for 100--500 iterations, evaluated on 100 source--target image pairs per model. The highest mean ASR
($85.5\%$) occurs at 300 iterations.}
\label{fig:iteration}
\end{minipage}
\end{figure}

\clearpage

\section{Optimization Algorithm}
\label{sec:optimization_algorithm}

Algorithm~\ref{alg:iaufoa} summarizes the alignment and perturbation updates for supplied source transformations and target views. Spatial view preparation is represented by the input view schedule; visual-invariance augmentation, adaptive transport, and dynamic weighting are shown explicitly. We use $\eta=10^{-8}$ and $\tau_w=1.0$; transport plans, confidence modulation, and ensemble weights are held fixed during backpropagation.

\begin{algorithm}[h]
\caption{IAU-FOA Optimization}
\label{alg:iaufoa}
\begin{algorithmic}[1]
\REQUIRE source image $\mathbf{x}_s$, target image $\mathbf{x}_{tar}$, surrogate encoder ensemble $\mathcal{F}=\{f_m\}_{m=1}^{M}$, perturbation budget $\epsilon$, iterations $T$, step size $\alpha$, local-loss weight $\lambda_l$, marginal KL weight $\rho$, fixed number of local clusters $K$ (default: $10$), prepared view schedule $\{\mathcal V^{(t)}\}_{t=0}^{T-1}$.
\ENSURE adversarial image $\mathbf{x}_{adv}$.
\STATE Initialize $\boldsymbol{\delta}^{(0)}=\mathbf{0}$, $\mathbf{x}_{adv}^{(0)}=\mathbf{x}_s+\boldsymbol{\delta}^{(0)}$, and evaluation index $k=1$.
\STATE Set $\mathbf a_m^{adv}=\mathbf a_m^t=\mathbf 1/K$, $\varepsilon_{ot}=0.1$, and $\phi=\rho/(\rho+\varepsilon_{ot})$.
\FOR{$t=0$ to $T-1$}
    \STATE Obtain prepared view pairs $(\mathcal T,\mathbf x_{tar}^{c})\in\mathcal V^{(t)}$, with VIA parameters $(s,\boldsymbol\beta)$ in $\mathcal T$.
    \STATE Set accumulated gradient $\mathbf{g}=\mathbf{0}$ and valid-copy count $n=0$.
    \FOR{each $(\mathcal T,\mathbf x_{tar}^{c})\in\mathcal V^{(t)}$}
        \STATE $\mathbf{x}^{c}=\mathcal{T}(\mathbf{x}_{adv}^{(t)})$; obtain global target features and $\mathbf Y_m^{clu}=\operatorname{KMeans}(\mathbf Z_m^p(\mathbf x_{tar}^{c}),K)$ without gradient tracking.
        \FOR{$m=1$ to $M$}
            \STATE $\mathcal{L}_{g}^{m}=1-\cos\left(\hat{\mathbf{z}}_{m}^{g}(\mathbf{x}^{c}),\hat{\mathbf{z}}_{m}^{g}(\mathbf{x}_{tar}^{c})\right)$.
            \STATE $\mathbf{X}_{m}^{clu}=\operatorname{KMeans}(\mathbf{Z}_{m}^{p}(\mathbf{x}^{c}),K)$.
            \STATE $\mathbf{S}_{m}(i,j)=\cos\left(\mathbf{x}_{m,i}^{clu},\mathbf{y}_{m,j}^{clu}\right)$.
            \STATE $\mathbf{K}_{m}(i,j)=\exp\left(-(1-\mathbf{S}_{m}(i,j))/\varepsilon_{ot}\right)$.
            \STATE $r_i=\max(\max_j\mathbf{S}_{m}(i,j),0)$, $\hat{r}_i=r_i/\big(\frac{1}{K}\sum_j r_j+\eta\big)$, $\phi_i^{a}=\rho\hat{r}_i/(\rho\hat{r}_i+\varepsilon_{ot})$ with $\gamma=1$.
            \STATE Initialize $\mathbf u=\mathbf v=\mathbf 1$; update $\mathbf v$ then $\mathbf u$ by Eq.~\textup{(\ref{eq:sinkhorn_update})}, for at most 100 rounds or until the mean change in $\mathbf v$ is below $10^{-2}$.
            \STATE $\mathbf{P}_{m}^{*}=\mathrm{diag}(\mathbf{u})\mathbf{K}_{m}\mathrm{diag}(\mathbf{v})$.
            \STATE $\widetilde{\mathbf{P}}_m=\operatorname{stopgrad}\!\left(\mathbf{P}_m^*/(\sum_{i,j}\mathbf{P}_m^*(i,j)+\eta)\right)$.
            \STATE $\mathcal{L}_{l}^{m}=\sum_{i,j}\widetilde{\mathbf{P}}_{m}(i,j)(1-\mathbf{S}_{m}(i,j))$.
            \STATE $\mathcal{J}_{m}^{c}=\mathcal{L}_{g}^{m}+\lambda_l\mathcal{L}_{l}^{m}$; $A_m^{(k)}=1-\mathcal{L}_{g}^{m}+\lambda_l\sum_{i,j}\widetilde{\mathbf{P}}_m(i,j)\mathbf{S}_m(i,j)$.
        \ENDFOR
        \STATE If $k=1$, set $A_m^{(0)}=\operatorname{stopgrad}(A_m^{(1)})$ for all $m$.
        \STATE $q_m^{(k)}=\operatorname{stopgrad}(A_m^{(k)})/(A_m^{(k-1)}+\eta)$, $m=1,\ldots,M$.
        \STATE $\omega_m^{(k)}=M\cdot\exp(q_m^{(k)}/\tau_w)/\sum_{j=1}^{M}\exp(q_j^{(k)}/\tau_w)$, $m=1,\ldots,M$.
        \STATE $\mathcal{J}^{c}=\sum_{m=1}^{M}\omega_m^{(k)}\mathcal{J}_{m}^{c}$; store $A_m^{(k)}\leftarrow\operatorname{stopgrad}(A_m^{(k)})$ for all $m$; $k\leftarrow k+1$.
        \STATE $\mathbf{g}\leftarrow \mathbf{g}+\nabla_{\boldsymbol{\delta}}\mathcal{J}^{c}$; $n\leftarrow n+1$.
    \ENDFOR
    \STATE $\mathbf{g}\leftarrow\mathbf{g}/\max(n,1)$.
    \STATE $\boldsymbol{\delta}^{(t+1)}
    =
    \Pi_{[-\epsilon,\epsilon]}
    \left(
    \boldsymbol{\delta}^{(t)}
    -
    \alpha\cdot \mathrm{sign}(\mathbf{g})
    \right)$.
    \STATE $\mathbf{x}_{adv}^{(t+1)}=\mathbf{x}_s+\boldsymbol{\delta}^{(t+1)}$.
\ENDFOR
\STATE \textbf{return} $\mathbf{x}_{adv}^{(T)}$.
\end{algorithmic}
\end{algorithm}

\clearpage
\section{Source--Target Similarity by Attack Outcome}
\label{sec:source_target_similarity}

To investigate whether the semantic proximity of a source--target image pair is
associated with targeted attack success, we partition the IAU-FOA adversarial examples
for each target MLLM into successful and failed subsets and compute the mean
CLIP similarity between the corresponding clean source and target images. As
reported in Table~\ref{tab:clip_similarity}, successful attacks exhibit higher
source--target similarity than failed attacks on all 12 target models.
The table also reports the corresponding ASR. Within each model category,
the Avg column aggregates successful and failed group means separately,
weighted by their respective sample counts, and averages ASR across models.
The differences in similarity are small and are interpreted as descriptive
associations between source--target similarity and attack outcome.
Because the analysis groups samples by attack outcome after evaluation, it
does not establish that increasing source--target similarity causes higher ASR.

\begin{table}[htbp]
\centering
\scriptsize
\setlength{\tabcolsep}{4pt}
\renewcommand{\arraystretch}{1.02}
\caption{%
  \textbf{Source--target CLIP similarity by attack outcome.}
  Mean similarity for successful and failed IAU-FOA attacks, with the corresponding ASR.
  Avg reports sample-count-weighted means for the successful and failed groups,
  and the arithmetic mean of ASR across models. Values are reported to three decimal places.
  Q, I, M, L, G, and Cl
  denote Qwen2.5-VL, InternVL3, MiniCPM-o, LLaVA-1.6, Gemini, and Claude;
  F, FL, S, and O denote Flash, Flash-Lite, Sonnet, and Opus, respectively.}
\label{tab:clip_similarity}
\begin{tabular}{@{}l*{7}{c}@{}}
\toprule
\textbf{Open-source}
& Q-7B
& Q-72B
& I-14B
& I-78B
& M-2.6
& L-7B
& \textbf{Avg} \\
\midrule
\textbf{Successful}
& 0.476 & 0.475 & 0.474 & 0.474 & 0.474 & 0.474 & \textbf{0.474} \\
\textbf{Failed}
& 0.462 & 0.465 & 0.467 & 0.469 & 0.469 & 0.469 & \textbf{0.466} \\
\textbf{ASR}
& 0.821 & 0.823 & 0.910 & 0.871 & 0.866 & 0.885 & \textbf{0.863} \\

\bottomrule
\addlinespace[2pt]
\textbf{Closed-source}
& G-3.5-F
& G-3.1-FL
& GPT-5.5
& GPT-5.4
& Cl-S-4.6
& Cl-O-4.8
& \textbf{Avg} \\
\midrule
\textbf{Successful}
& 0.474 & 0.475 & 0.478 & 0.475 & 0.476 & 0.474 & \textbf{0.475} \\
\textbf{Failed}
& 0.469 & 0.465 & 0.463 & 0.469 & 0.464 & 0.470 & \textbf{0.466} \\
\textbf{ASR}
& 0.776 & 0.789 & 0.696 & 0.672 & 0.744 & 0.822 & \textbf{0.750} \\
\bottomrule
\end{tabular}
\end{table}

\clearpage
\section{Evaluation Prompt}
\label{sec:prompt}

For reproducibility, Fig.~\ref{fig:prompt} presents the complete prompt used by
GPT-4o to compute GPTScore. For each adversarial--target pair, the two
placeholders are replaced with the descriptions generated by the victim MLLM
for the adversarial image and the corresponding target image. The judge is
instructed to focus on subject-level semantic consistency, tolerate minor
differences in wording and detail, and return only a scalar score in $[0,1]$.
We use this scalar directly when computing AvgSim and classify an attack as
successful when the score is at least $0.5$. The same prompt and decision
rule are applied to all attack methods and victim models.

\begin{figure}[h]
    \centering
    \includegraphics[width=0.8\linewidth]{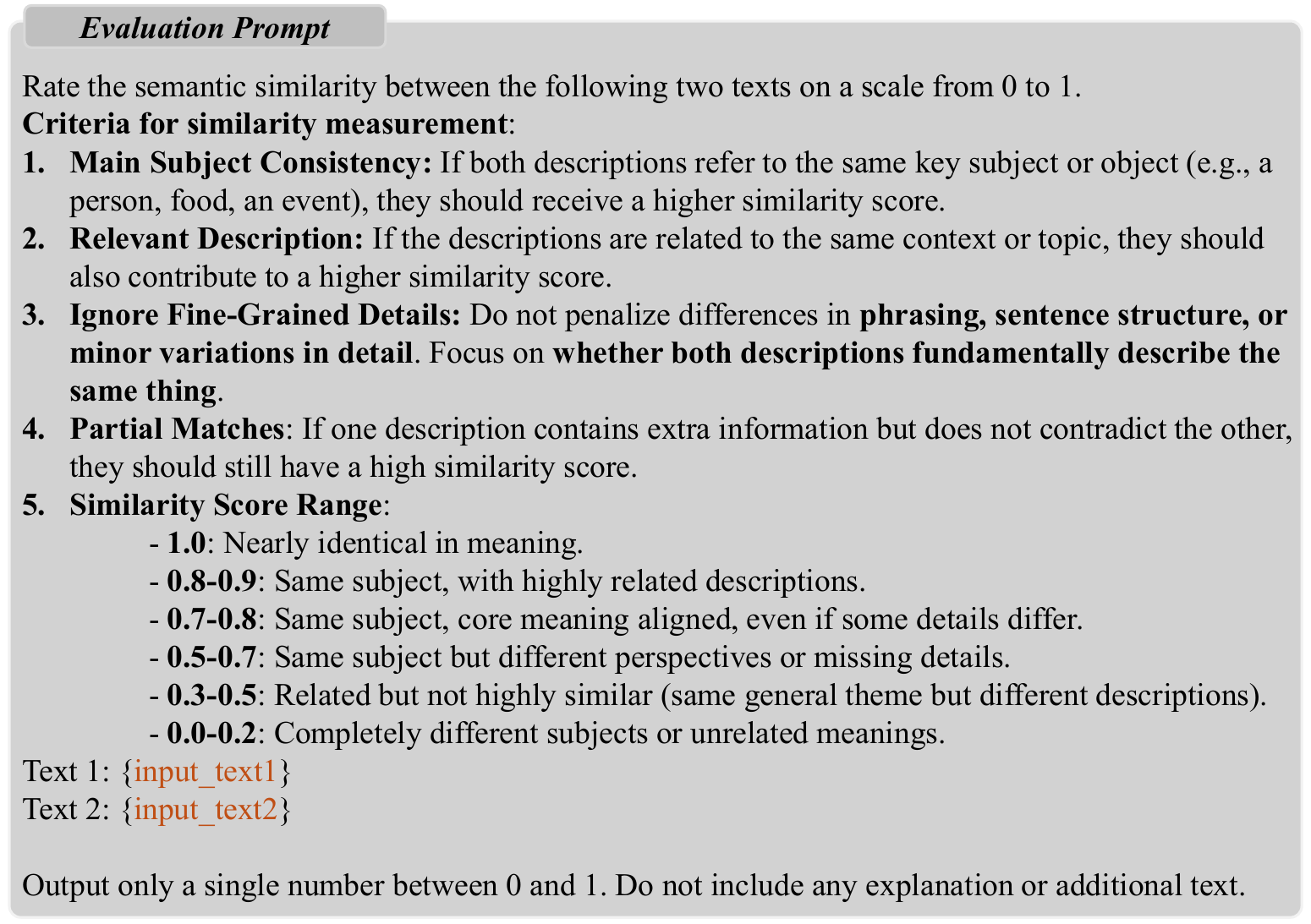}
    \caption{GPT-4o evaluation prompt for GPTScore. Placeholders contain the victim MLLM's descriptions of the adversarial and target images.}
    \label{fig:prompt}
\end{figure}

\end{document}